\documentclass[11pt, a4paper, logo, copyright, nonumbering]{mll_style}

\usepackage{natbib}
\usepackage{xspace}
\usepackage{adjustbox}
\usepackage{array}
\usepackage{float}
\usepackage{placeins}
\usepackage{dblfloatfix}
\usepackage{enumitem}
\usepackage{setspace}
\usepackage{multirow}
\usepackage{makecell}
\usepackage{longtable}
\usepackage{xltabular}
\usepackage{threeparttable}
\usepackage{siunitx}
\usepackage{pdflscape}
\usepackage{mathtools}
\usepackage{bm}
\usepackage{dsfont}
\usepackage{subcaption}
\usepackage{tikz}
\usepackage{pgfplots}
\pgfplotsset{compat=1.18}
\usepackage{algorithm}
\usepackage{algpseudocode}
\usepackage{listings}
\usepackage[capitalize,noabbrev]{cleveref}
\usepackage[textsize=tiny]{todonotes}
\usepackage{fontawesome5}
\usepackage[most]{tcolorbox}
\usepackage{CJKutf8}
\usepackage{pifont}
\usepackage{caption}
  \usepackage{etoolbox}
  \AtBeginEnvironment{tabular}{\scriptsize}

\usepackage[toc,page,header]{appendix}
\usepackage{minitoc}
\usepackage{titletoc}

\definecolor{blkA}{RGB}{245,247,255} 
\definecolor{blkB}{RGB}{245,250,245} 
\definecolor{blkC}{RGB}{255,247,245} 
\definecolor{gainGreen}{RGB}{24,130,72}
\definecolor{mutedGray}{RGB}{120,120,120}

\newcommand{\cmark}{\ding{51}}
\newcommand{\xmark}{\ding{55}}
\newcommand{\gainpos}[1]{\textcolor{gainGreen}{#1}}
\newcommand{\gainother}[1]{\textcolor{mutedGray}{#1}}

\theoremstyle{plain}
\newtheorem{theorem}{Theorem}[section]
\newtheorem{proposition}[theorem]{Proposition}
\newtheorem{lemma}[theorem]{Lemma}

\theoremstyle{definition}
\newtheorem{definition}[theorem]{Definition}
\newtheorem{assumption}[theorem]{Assumption}
\theoremstyle{remark}
\newtheorem{remark}[theorem]{Remark}

\newif\ifUseAbbrev
\UseAbbrevfalse %

\reportnumber{001}

\title{\centering RAGEN-2: Reasoning Collapse in Agentic RL\vspace{-8pt}}

\date{}

\author[*]{
~Zihan Wang$^{\dagger*,1}$,
Chi Gui$^{\dagger,12}$,
Xing Jin$^{\dagger,3}$,
Qineng Wang$^{\dagger,1}$,
Licheng Liu$^{\dagger,4}$,Kangrui Wang$^{1}$, ~~~~~~   
Shiqi Chen$^{5}$,
Linjie Li$^{6}$,
Zhengyuan Yang$^{7}$,
Pingyue Zhang$^{1}$,
Yiping Lu$^{1}$,
Jiajun Wu$^{8}$,
Li Fei-Fei$^{8}$,
Lijuan Wang$^{7}$,
Yejin Choi$^{8}$,
Manling Li$^{1}$\\
{\small $^\dagger$Core contributors.~~~$^*$Project lead.}\\
{\small $^1$Northwestern University~~~$^2$UIUC~~~$^3$Independent~~~$^4$Imperial College London}\\
{\small $^5$Oxford University~~~$^6$University of Washington~~~$^7$Microsoft~~~$^8$Stanford University}\\
{\small \url{https://ragen-ai.github.io/v2/}}
\vspace{-10pt}
}

\begin{abstract}
RL training of multi-turn LLM agents is inherently unstable, and reasoning quality directly determines task performance. Entropy is widely used to track reasoning stability. However, entropy only measures diversity within the same input, and cannot tell whether reasoning actually responds to different inputs. In RAGEN-2, we find that even with stable entropy, models can rely on fixed templates that look diverse but are input-agnostic. We call this \textbf{template collapse}, a failure mode invisible to entropy and all existing metrics. 
To diagnose this failure, we decompose reasoning quality into \textbf{within-input diversity} (Entropy) and \textbf{cross-input distinguishability} (Mutual Information, MI), and introduce a family of mutual information proxies for online diagnosis. Across diverse tasks, mutual information correlates with final performance much more strongly than entropy, making it a more reliable proxy for reasoning quality. We further explain template collapse with a \emph{signal-to-noise ratio} (SNR) mechanism. Low reward variance weakens task gradients, letting regularization terms dominate and erase cross-input reasoning differences.  To address this, we propose \textbf{SNR-Aware Filtering} to select high-signal prompts per iteration using reward variance as a lightweight proxy. Across planning, math reasoning, web navigation, and code execution, the method consistently improves both input dependence and task performance.

\end{abstract}

\begin{document}
\begin{CJK*}{UTF8}{gbsn}

\doparttoc
\faketableofcontents

    \maketitle

\section{Introduction}
 Training multi-turn LLM agents with reinforcement learning (RL) is inherently challenging~\cite{qi2025defeatingtraininginferencemismatchfp16,
  zhang2026precisiontraininginferencemismatchoptimization, yu2025dapo}. Researchers therefore monitor reward for \textbf{outcome stability} and entropy for 
  \textbf{reasoning process stability}~\cite{schulman2017proximalpolicyoptimizationalgorithms, ouyang2022traininglanguagemodelsfollow, xu2025epoentropyregularizedpolicyoptimization},      
  treating both as stability indicators of RL training.

However, entropy can be an ambiguous signal to understand reasoning quality. When entropy decreases, it may simply reflect the model becoming more specialized and confident on the task, which is a natural outcome of RL optimization~\cite{yu2025dapo, xu2025epoentropyregularizedpolicyoptimization}. When entropy remains high, reasoning can still drift toward fixed templates that appear diverse within any single input but are effectively the same across inputs (Figure~\ref{fig:teaser}). We call this \textbf{template collapse}, a failure mode invisible to both metrics.
This risk is especially acute in multi-turn settings: sparse rewards cannot distinguish input-driven reasoning from templated reasoning that merely happens to succeed~\cite{wang2025practitionersguidemultiturnagentic, wang2025ragenunderstandingselfevolutionllm}, and reasoning chains are hard to get directly supervised~\cite{shao2024deepseekmathpushinglimitsmathematical, cui2025processreinforcementimplicitrewards}. As a result, template collapse can persist unnoticed during training, making agents unreliable and silently hurting their reasoning abilities.

To understand and mitigate template collapse, this paper addresses two questions.
  \textbf{(Q1) How to diagnose?} (\S\ref{sec:collapse}) Entropy-based metrics~\cite{wei2025gtrguidedthoughtreinforcement, yao2025diversityawarepolicyoptimizationlarge, yun2025priceformatdiversitycollapse} track
   within-input variability but miss input dependence across inputs, so they fail to detect template collapse. We propose a mutual information (MI)
  proxy~\cite{coverthomas2006elements} that scores each reasoning chain against all batch inputs to measure input dependence, without external models.
  \textbf{(Q2) Why does it happen?} (\S\ref{sec:mechanism}) We explain through a signal-to-noise ratio (SNR) lens. Task gradients draw signal from reward differences across within-input trajectories. Sampling noise and input-agnostic regularization (KL divergence and
  entropy regularization~\cite{schulman2017proximalpolicyoptimizationalgorithms, xu2025epoentropyregularizedpolicyoptimization}) dilute this signal. Low SNR lets noise dominate, erasing cross-input reasoning differences.

  To address template collapse, based on the SNR view, we introduce \textbf{SNR-Aware Filtering}, which uses reward variance as a lightweight SNR proxy to select high-signal prompts each iteration, without additional supervision. Throughout training, the MI proxy monitors input dependence; across experiments, MI correlates with task performance significantly more strongly than entropy,  validating it as a diagnostic for template collapse.

  Together, they constitute a diagnostic framework for a systematic failure mode in multi-turn agent RL, validated across planning~\cite{schrader2018gymsokoban}, mathematical    
  reasoning~\cite{yu2023metamath, katz2025countdown}, web navigation, code execution, and tool use, under multiple RL algorithms, model scales, and modalities. SNR-Aware Filtering consistently improves input dependence and task performance, providing direct experimental support for the SNR mechanism.

  Our contributions are summarized as follows:

\begin{enumerate}[topsep=1pt, itemsep=1pt, parsep=0pt, leftmargin=*]

\item \textbf{Identifying template collapse.} We find that template collapse occurs when reasoning appears diverse within inputs but becomes input-agnostic across inputs. We propose a mutual information proxy to detect it without external models.

\item \textbf{Explaining template collapse via SNR.} We show that low reward variance weakens task gradients while input-agnostic regularization remains constant, erasing input dependence. We provide gradient decomposition evidence across reward-variance buckets.

\item \textbf{SNR-Aware Filtering.} We propose filtering prompts by reward variance before each update. We demonstrate that this improves input dependence and performance across tasks, algorithms, scales, and modalities.

\end{enumerate}

\begin{figure*}
    \centering
    \includegraphics[width=1\linewidth]{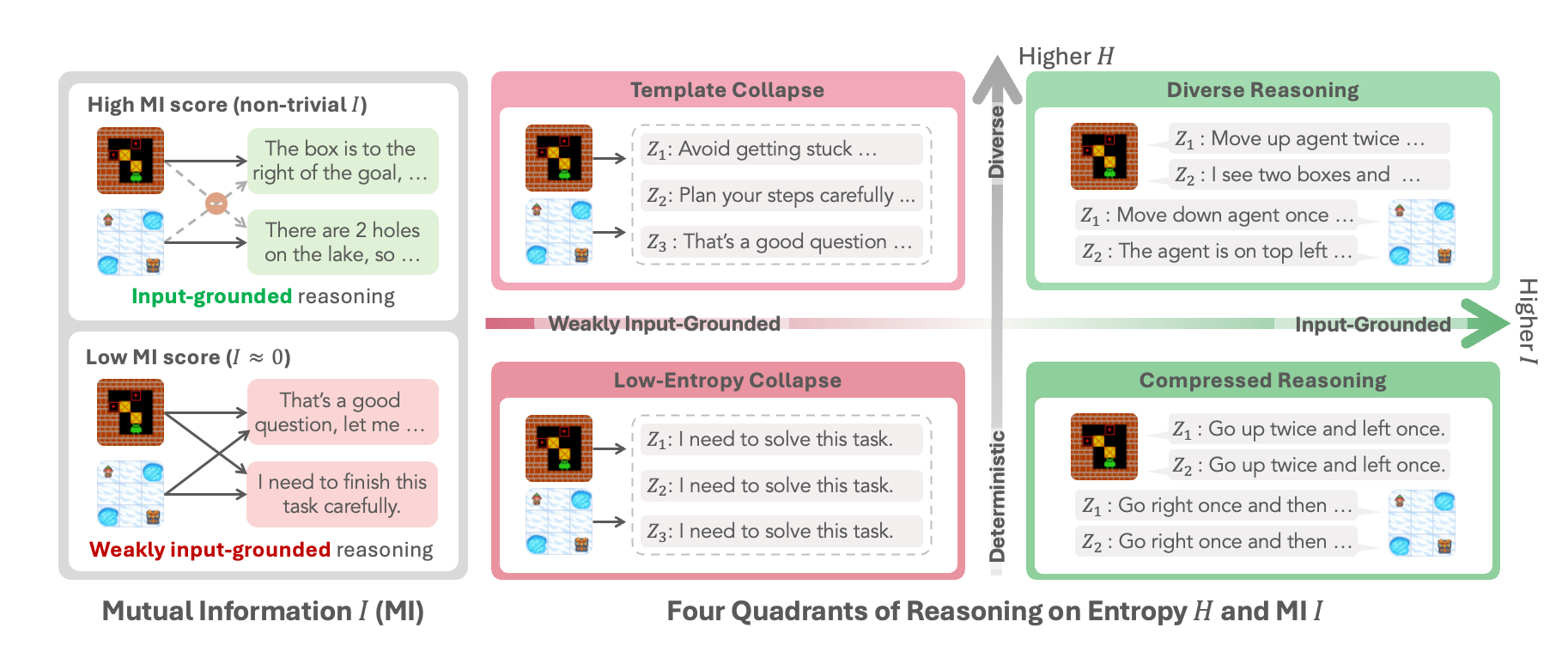}
\caption{Left: input-driven reasoning adapts to the current state; templated reasoning produces nearly identical responses across different inputs. Right: four reasoning regimes characterized along two axes: conditional entropy $H(Z\mid X)$ (within-input diversity) and mutual information $I(X;Z)$ (input dependence). Details in Section~\ref{sec:collapse}.}
    \label{fig:teaser}
    \vspace*{-0.9em}
\end{figure*}

\vspace{-7pt}


\section{Template Collapse in Multi-turn Agent RL}
\label{sec:collapse}

 \subsection{Setup and Preliminaries}

  We study closed-loop multi-turn agent reinforcement learning~\cite{wang2025ragenunderstandingselfevolutionllm}, where a policy $\pi_\theta$ is trained by repeatedly rolling out trajectories under the current policy and environment and updating on the collected experience. At each time step $t$, the agent observes $o_t$, generates a response consisting of reasoning tokens $z_t$ and an executable action $a_t$, and receives reward $r_t$, forming a trajectory $\tau = \{(o_t, z_t, a_t, r_t)\}_{t=1}^T$.

  We use $X$ to denote the full context available to the model immediately before generating reasoning at turn $t$: this comprises the system prompt, all prior observations $o_{1:t}$, actions $a_{1:t-1}$, and reasoning tokens $z_{1:t-1}$. We use $Z$ to denote the reasoning token sequence the model generates for that turn, excluding action tokens and boundary markers (e.g., \texttt{</think>}).

  The standard PPO/GRPO objective contains regularization terms (KL divergence, entropy bonus) that act uniformly across all inputs regardless of their content:

  $$\mathcal{L}(\theta) = \mathbb{E}_{x,\tau}\big[A(\tau, x)\big] - \lambda_{\mathrm{KL}} D_{\mathrm{KL}}(\pi_\theta \| \pi_{\mathrm{ref}}) + \lambda_H H(\pi_\theta),$$

  where $A(\tau, x)$ is the advantage.

  \subsection{Rethinking Reasoning Collapse from an Information-Theoretic Lens}

  \noindent\textbf{Why entropy is insufficient to measure reasoning quality?} Researchers proxy process stability with entropy and outcome stability with reward, treating both as
   evidence of healthy training. Stable entropy, however, does not guarantee stable reasoning. Reasoning diversity (marginal entropy) $H(Z)$ decomposes via the standard
  identity~\cite{coverthomas2006elements}:

  $$H(Z) = I(X;Z) + H(Z\mid X),$$

  where $I(X;Z)$ is input dependence (\textbf{mutual information} between input $X$ and reasoning $Z$), and $H(Z\mid X)$ is within-input diversity (\textbf{conditional entropy} of
  reasoning given input). Entropy metrics proxy $H(Z\mid X)$, but neither captures a decline in $I(X;Z)$: the policy can sustain high $H(Z\mid X)$ while $I(X;Z)$ drops
  to zero, producing diverse but input-agnostic boilerplate. We call this \textbf{template collapse}.

  \noindent\textbf{Reasoning regimes with a mutual information view.} Figure~\ref{fig:teaser} illustrates four reasoning states along these two axes:
  (i) \emph{Diverse Reasoning} (high $H(Z\mid X)$, high $I(X;Z)$): the desired regime where reasoning is both varied within each input and systematically grounded across different inputs;
  (ii) \emph{Template Collapse} (high $H(Z\mid X)$, low $I(X;Z)$): superficially diverse but input-agnostic—the systematic blind spot of existing stability metrics;
  (iii) \emph{Compressed Reasoning} (low $H(Z\mid X)$, high $I(X;Z)$): input-faithful but overly deterministic; and
  (iv) \emph{Low-Entropy Collapse} (low $H(Z\mid X)$, low $I(X;Z)$): fully degenerate with deterministic and input-agnostic outputs.
  Among these, Template Collapse is uniquely problematic because entropy-based metrics can remain high while input dependence collapses. Empirically, $I(X;Z)$ correlates significantly more strongly with task performance than entropy does (Figure~\ref{fig:F08}).

\subsection{Mutual Information Proxy Family}
\label{sec:mi_proxy}

\begin{table}[t]
  \centering
  \small
  \setlength{\tabcolsep}{4pt}
  \renewcommand{\arraystretch}{1.08}
  \caption{MI proxy family. All variants are derived from in-batch cross-scoring of reasoning traces against prompts, using matched (per-token log-prob under the true prompt) and marginal (per-token log-prob under the uniform prompt mixture) as base quantities. First-turn variants use only the first agent turn; trajectory variants sample across all turns.} 
  \label{tab:rv_compact_rho_r}
  \begin{tabular}{llll}
  \toprule
  \textbf{Type} & \textbf{Proxy} & \textbf{Formula} & \textbf{Notes} \\
  \midrule
  \multirow{2}{*}{Discrete}
    & Retrieval-Acc
    & $\frac{1}{PG}\sum_{i,k}\mathbf{1}[\arg\max_j \mathbf{L}_{i,k,j} = i]$
    & Chance level $1/P$ under template collapse \\
  bi lilin k  & Recall@$k$
    & $\frac{1}{PG}\sum_{i,k}\mathbf{1}[i \in \mathrm{top}\text{-}k_j(\mathbf{L}_{i,k,j})]$
    & $k \in \{2, 4, 8\}$ \\
  \midrule
  \multirow{2}{*}{\shortstack[l]{Continuous\\(raw)}}
    & MI-Est
    & $\frac{1}{PG}\sum_{i,k}(\mathrm{matched}_{i,k} - \mathrm{marginal}_{i,k})$
    & Per-token; approaches 0 under collapse \\
    & MI-Seq-Est
    & $\frac{1}{PG}\sum_{i,k}\bigl(\mathbf{L}_{i,k,i} - \log\tfrac{1}{P}\sum_j e^{\mathbf{L}_{i,k,j}}\bigr)$
    & Per-sequence; no length normalization \\
  \midrule
  \multirow{2}{*}{\shortstack[l]{Continuous\\(z-score)}}
    & MI-ZScore
    & $\frac{1}{PG}\sum_{i,k}\frac{\mathrm{matched}_{i,k} - \mathrm{marginal}_{i,k}}{\sigma_{\mathrm{batch}} + \epsilon}$
    & Normalized by current-batch marginal std \\
    & MI-ZScore-EMA
    & $\frac{1}{PG}\sum_{i,k}\frac{\mathrm{matched}_{i,k} - \mathrm{marginal}_{i,k}}{\sigma_{\mathrm{EMA}} + \epsilon}$
    & $\sigma_{\mathrm{EMA}}^{(t)} = \alpha\,\sigma_{\mathrm{EMA}}^{(t-1)} + (1{-}\alpha)\,\sigma_{\mathrm{batch}}^{(t)}$ \\
  \bottomrule
  \end{tabular}
  \end{table}

\noindent\textbf{How do we estimate mutual information?}
True mutual information $I(X;Z)$ has no closed form for high-dimensional token sequences, so we propose an empirical proxy $\widehat{I}(X;Z)$ based on retrieval. The intuition: mutual information $I(X;Z)$ measures how much knowing the reasoning $Z$ tells us about which input $X$ produced it. When $I(X;Z)$ is high, different inputs yield distinguishable reasoning patterns—the model adapts its reasoning to the specific problem. When $I(X;Z)$ is low, reasoning becomes input-agnostic: observing $Z$ gives little clue about which $X$ it came from. This is the signature of template collapse. If reasoning truly collapses into templates, it should be easy to detect: a reasoning trace $Z$ generated from input $X_i$ will be equally likely under any other input $X_j$.

\noindent\textbf{Method: In-Batch Cross-Scoring.}
Given $P$ prompts and $G$ reasoning samples per prompt from training rollouts, we compute teacher-forced log-likelihoods for every $(Z_{i,k}, X_j)$ pair, forming the scoring matrix $\mathbf{L}_{i,k,j} = \log p_\theta(Z_{i,k} \mid X_j)$. We extract two length-normalized quantities:
\begin{equation}
\label{eq:base}
\mathrm{matched}_{i,k} = \frac{\mathbf{L}_{i,k,i}}{|Z_{i,k}|}, \qquad
\mathrm{marginal}_{i,k} = \frac{1}{|Z_{i,k}|}\log\frac{1}{P}\sum_j \exp(\mathbf{L}_{i,k,j}),
\end{equation}
where $\mathrm{matched}_{i,k}$ is the per-token log-likelihood of reasoning $Z_{i,k}$ under its true source input $X_i$, and $\mathrm{marginal}_{i,k}$ approximates the marginal log-likelihood $\log p_\theta(Z_{i,k})$ via a uniform mixture over all prompts in the batch.

\noindent\textbf{Two Primary Proxies.}
We use two complementary proxies derived from Eq.~\ref{eq:base}:

\textit{(1) Retrieval-Acc} (discrete, interpretable): We define
\[
\mathrm{Acc} = \frac{1}{PG}\sum_{i=1}^P\sum_{k=1}^G\mathbb{I}\Big[i = \arg\max_{j}\,\mathbf{L}_{i,k,j}\Big].
\]
Under collapse, $\mathrm{Acc}$ approaches chance level $1/P$ (1.56\% at $P{=}64$), providing an absolute reference.

\textit{(2) MI-ZScore-EMA} (continuous, robust): We estimate input dependence as
\[
\widehat{I}(X;Z) = \frac{1}{PG}\sum_{i=1}^P\sum_{k=1}^G\Big(\mathrm{matched}_{i,k} - \mathrm{marginal}_{i,k}\Big),
\]
which increases when reasoning is much more compatible with its source input than with the batch mixture. In template-collapse regimes, $\mathrm{matched}_{i,k}\approx \mathrm{marginal}_{i,k}$ for many samples and thus $\widehat{I}(X;Z)$ approaches 0. We apply z-score normalization and exponential moving average (EMA) to stabilize training monitoring, yielding MI-ZScore-EMA.

\noindent\textbf{Proxy Variants and Validation.}
Table~\ref{tab:rv_compact_rho_r} lists additional proxy variants, varying along three dimensions: (1) turn scope (first-turn only vs.\ trajectory-uniform sampling); (2) aggregation (discrete retrieval vs.\ continuous MI estimate); (3) length normalization (per-token vs.\ per-sequence). For comparison, conditional entropy $H(Z\mid X) = -\frac{1}{PG}\sum_{i,k}\mathrm{matched}_{i,k}$ and marginal entropy $H(Z) = -\frac{1}{PG}\sum_{i,k}\mathrm{marginal}_{i,k}$ are logged in parallel, satisfying $H(Z) = \widehat{I}(X;Z) + H(Z\mid X)$. We set $\epsilon = 10^{-3}$ and $\alpha = 0.9$ for z-score normalization and EMA, respectively.

Empirically, Retrieval-Acc and MI-ZScore-EMA achieve positive Spearman correlation with final task performance ($+0.39$ for Trajectory MI-ZScore), substantially above entropy metrics, which show negative correlations ($-0.11$ to $-0.14$), confirming entropy is misleading in direction (Figure~\ref{fig:F08}). All proxies reuse $(X_i, Z_{i,k})$ pairs from the training rollout and require no additional model or inference pass; implementation details are in Appendix~C.

\begin{figure*}[t]
    \centering
    \includegraphics[width=.9\linewidth]{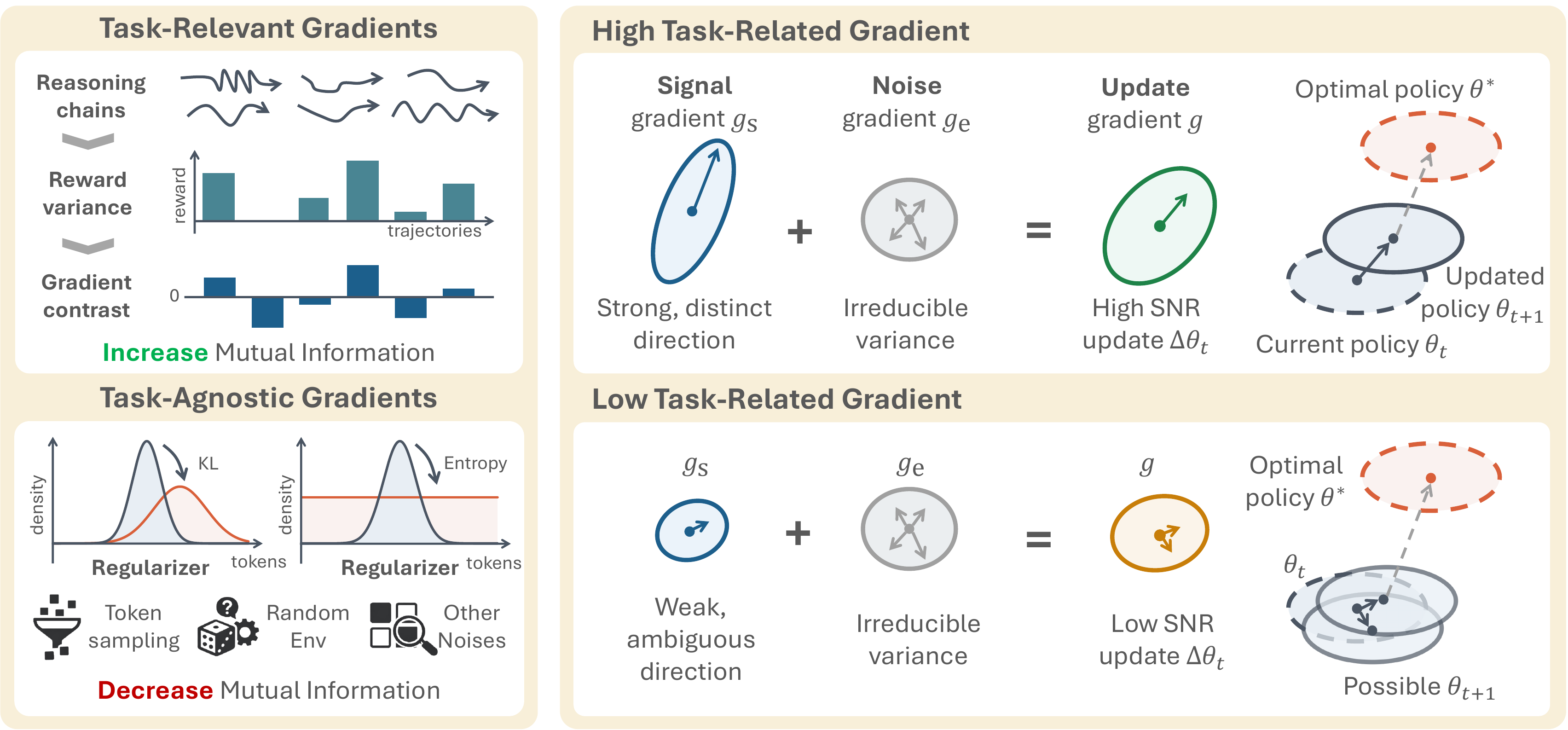}
\caption{
Schematic Signal-to-Noise Ratio (SNR) view of RL updates. 
Left: total gradient decomposes into task gradient (sharpens with higher within-input reward variance) and regularization gradient. Right: high reward variance yields strong task gradient and better convergence (high SNR); low reward variance makes regularization gradient dominate, producing erratic updates and input-agnostic reasoning (low SNR).
}
    \label{fig:mechanism}
    \vspace*{-1.0em}
\end{figure*}

\section{The Mechanism of Template Collapse: A Signal-to-Noise Ratio (SNR) View}
\label{sec:mechanism}

We have defined template collapse (low $I(X;Z)$, high $H(Z\mid X)$) and introduced an MI proxy to diagnose it. This section explains why RL training produces this failure mode and how to mitigate it. Our core finding: when policy gradient updates are dominated by input-agnostic noise rather than task-discriminative signal—low signal-to-noise ratio (SNR)—reasoning drifts toward templates that appear diverse within each input but ignore cross-input differences.

\subsection{Observing Signal-Noise Imbalance in RL Gradients}
\label{sec:snr_empirical}

We begin with an empirical observation that motivates the mechanistic analysis. Sorting training prompts by their within-input reward variance $\widehat{\mathrm{Var}}(R\mid X)$ and grouping them into equal-sized buckets, we measure the gradient norms contributed by task objectives versus regularization terms (Figure~\ref{plot:1}). Three patterns emerge consistently across algorithms:

\begin{enumerate}[topsep=1pt, itemsep=1pt, parsep=0pt, leftmargin=*]
\item \textbf{Task gradient scales with reward variance}: $\|g_{\text{task}}\|$ increases monotonically with bucket RV. High-variance prompts yield strong task-discriminative gradients; low-variance prompts produce weak gradients even when non-zero.
\item \textbf{Regularization gradient is flat}: $\|g_{\text{reg}}\|$ (from KL and entropy terms) remains constant across all buckets, applying uniform contraction to every reasoning chain regardless of its source prompt or reward signal.
\item \textbf{Low-RV prompts produce gradient updates dominated by regularization}: In the lowest-variance buckets, task gradients nearly vanish while regularization gradients persist, meaning updates are driven almost entirely by input-agnostic noise.
\end{enumerate}

\begin{figure*}
    \centering
    \includegraphics[width=0.9\linewidth]{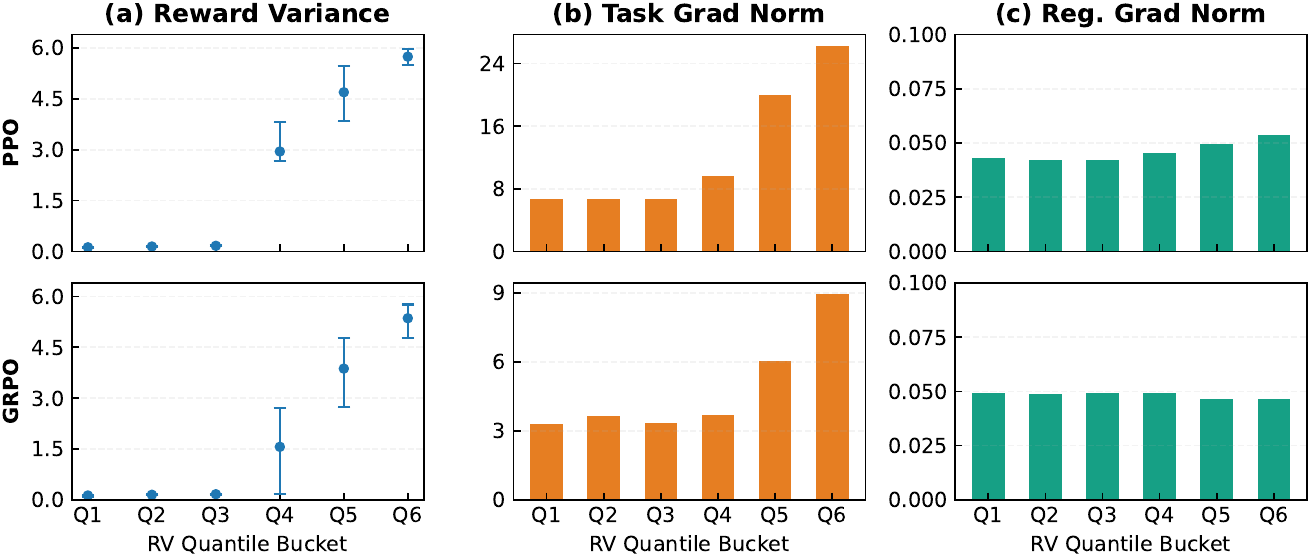}
    \caption{
    Prompts sorted into six equal-sized reward-variance buckets Q1--Q6. We find: (a) Task gradient norm increases monotonically with bucket RV; (b) When RV near 0, substantial task gradients persist despite carrying almost no useful signal; (c) Regularizer gradient norm (KL + entropy) is flat across buckets. This directly supports the SNR mechanism under both algorithms.}
    \label{plot:1}
    \vspace*{-0.9em}
\end{figure*}

\noindent This gradient imbalance suggests that low reward variance weakens the task-discriminative component of updates, allowing input-agnostic regularization to dominate. When many prompts fall into this regime, the model learns to produce reasoning that satisfies regularization constraints (diverse, fluent) but ignores input-specific requirements—exactly the signature of template collapse.

\subsection{Formalizing the SNR Mechanism via Gradient Decomposition}
\label{sec:snr_mech}

The empirical pattern above can be formalized through a \textit{signal-to-noise decomposition of policy gradients}. Low within-input reward variance collapses advantages toward zero, weakening the task gradient.
Simultaneously, input-agnostic regularization terms apply uniform contraction to every reasoning
chain regardless of its source prompt. When the task gradient is weak, regularization dominates
every update and pushes reasoning toward input-agnostic patterns, lowering $I(X;Z)$. This is the
gradient-level mechanism behind template collapse (\Cref{fig:mechanism}; regularizer-dominance analysis in Appendix~\ref{app:reg}).

For input $x$ with $G$ sampled trajectories, the advantage estimate is $A_g = R_g - \bar{R}(x)$ and the task gradient is
\begin{equation*}
g_{\mathrm{task}}(x) = \frac{1}{G}\sum_g A_g \,\nabla_\theta \log \pi_\theta(\tau_g \mid x).
\end{equation*}
The Cauchy-Schwarz inequality gives (Appendix~\ref{app:rv-snr}):
\begin{equation*}
|g_{\mathrm{task}}(x)| \leq \sqrt{\widehat{\mathrm{Var}}(R\mid X=x)} \cdot C.
\end{equation*}
Low reward variance therefore weakens $g_{\mathrm{task}}$ while leaving $g_{\mathrm{reg}}$ unchanged, driving $I(X;Z) \to 0$. Critically, $H(Z\mid X)$ need not decline: entropy regularization can sustain within-input diversity while input dependence collapses.

We formalize this through a three-noise decomposition of the total gradient:
$g_{\text{total}} = g_{\text{signal}} + g_{\text{task-noise}} + g_{\text{reg}}.$

\begin{table}[h]
\centering\small
\setlength{\tabcolsep}{5pt}
\renewcommand{\arraystretch}{1.05}
\caption{Three-noise decomposition of the policy update gradient.}
\begin{tabular}{p{2.2cm}p{4.4cm}lll}
\toprule
\textbf{Component} & \textbf{Source} & \textbf{Level} & \textbf{Ctrl.} & \textbf{Mitigation} \\
\midrule
$g_{\text{signal}}$ & Meaningful reward differences across same-prompt trajectories & Prompt & No & SNR-Aware Filtering \\
$g_{\text{task-noise}}$ & Sampling and environment stochasticity & Prompt & No & Filter high-noise prompts \\
$g_{\text{reg}}$ & Uniform contraction per chain, independent of input (KL, entropy) & Chain & Yes & Tune $\lambda_{\text{KL}}$, $\lambda_{\text{ent}}$ \\
\bottomrule
\end{tabular}
\label{tab:three_noise}
\end{table}


\noindent Signal and task noise both vary across prompts, but only the former carries
task-discriminative information. Regularization noise acts uniformly at the chain level:
every reasoning chain receives the same KL/entropy contraction regardless of its source prompt,
making it inherently input-agnostic and the direct suppressive force on cross-input differences (Table~\ref{tab:three_noise}).

In practice, $g_{\text{task}} = g_{\text{signal}} + g_{\text{task-noise}}$ merges the two
prompt-level components. The SNR is
$\mathrm{SNR}(x) = \|g_{\text{signal}}(x)\| / (\|g_{\text{task-noise}}(x)\| + \|g_{\text{reg}}\|).$
Low SNR shifts updates toward input-agnostic directions, lowering $I(X;Z)$ even when
$H(Z\mid X)$ remains high (Appendix~\ref{app:reg}).

\textbf{Low reward variance but non-vanishing gradient norm.}
When $\widehat{\mathrm{Var}}(R\mid X) \approx 0$, advantages collapse to zero and
$g_{\text{task}} \approx 0$, yet $\|g_{\text{total}}\| \approx \|g_{\text{reg}}\|$ because
$g_{\text{reg}}$ is independent of reward variance. Low-RV prompts therefore produce updates
driven entirely by input-agnostic regularization noise, systematically lowering $I(X;Z)$.
SNR-Aware Filtering removes these task-useless but regularization-active updates by filtering
out low-RV prompts, the core mechanism by which the method restores input-conditioned reasoning.

\begin{figure}[t]
    \centering
    \includegraphics[width=1\linewidth]{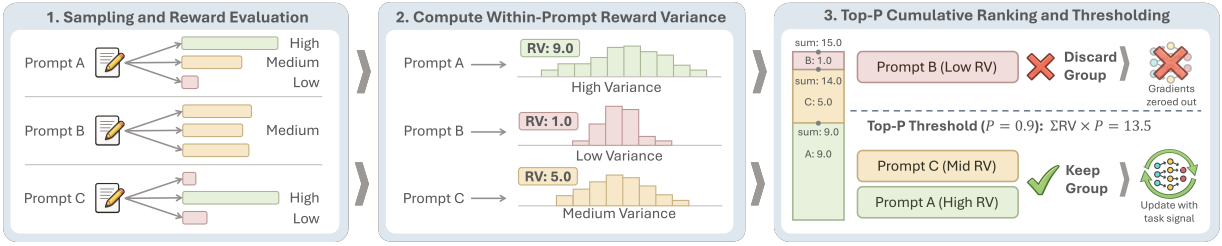}
    \caption{SNR-Aware Filtering workflow. At each training iteration: (1) rollout generation collects trajectories; (2) within-prompt reward variance is computed as SNR proxy; (3) prompts are ranked by RV and top-p fraction retained, policy update performed only on high-signal subset. This filtering loop can prevent updating on noisy rollouts and requires no additional models/rollouts beyond standard RL.}
    \label{fig:rvfilter}
    \vspace*{-1.0em}
\end{figure}

\subsection{SNR-Aware Filtering: Prioritizing High-Signal Updates}
\label{sec:rv_filter}

The gradient analysis above identifies the \textit{mechanism behind template collapse}: low reward variance weakens task signal, allowing regularization noise to dominate and push reasoning toward input-agnostic patterns. This suggests a direct mitigation strategy: prioritize prompts with higher within-input reward variance, where advantage estimates carry stronger task-discriminative information and regularization is less likely to dominate the update.

We propose \textbf{SNR-Aware Filtering}: at each training iteration, estimate $\widehat{\mathrm{Var}}(R\mid X)$ for each prompt and retain only the top fraction by variance before computing parameter updates (workflow in Figure~\ref{fig:rvfilter}). This concentrates gradient budget on high-SNR prompts and filters out low-variance updates that would be dominated by input-agnostic regularization.

\textbf{Reward variance as SNR proxy.}
At each iteration, we estimate $\mathrm{Var}(R\mid X)$ at the prompt level by sampling $G$ trajectories for the same prompt $X$ and computing the sample variance of episode returns:
\begin{equation*}
\begin{aligned}
\widehat{\mathrm{Var}}(R\mid X) &= \frac{1}{G-1}\sum_{g=1}^{G}\big(R_g(X)-\overline{R}(X)\big)^2,\\
\overline{R}(X) &= \frac{1}{G}\sum_{g=1}^{G}R_g(X).
\end{aligned}
\end{equation*}
Higher $\widehat{\mathrm{Var}}(R\mid X)$ indicates trajectories can be meaningfully distinguished by reward, strengthening advantage estimates and increasing the likelihood that gradients align with task-relevant directions (Appendix~\ref{app:rv-snr}).

\textbf{Top-$p$ filtering by reward variance.}
We keep the top fraction of prompts by variance score with keep rate $\rho\in(0,1]$, analogous to nucleus sampling~\cite{holtzman2020curiouscaseneuraltext} but ranking by per-prompt reward variance rather than token probability. Given $P$ prompts indexed by $i=1,\dots,P$ with variance scores $\widehat{\mathrm{Var}}(R\mid X=x_i)$, we rank by descending variance:
$$
\widehat{\mathrm{Var}}(R\mid X=x_{\sigma(1)}) \ge \widehat{\mathrm{Var}}(R\mid X=x_{\sigma(2)}) \ge \cdots \ge \widehat{\mathrm{Var}}(R\mid X=x_{\sigma(P)}),
$$
where $\sigma:\{1,\dots,P\}\to\{1,\dots,P\}$ is a permutation. Define the selection threshold as
$$
\tau = \rho \sum_{i=1}^P \widehat{\mathrm{Var}}(R\mid X=x_i),
$$
and accumulate variance mass from the top until reaching $\tau$:
$$
S = \left\{\sigma(1),\dots,\sigma(k^*)\right\}, \quad \text{where } k^* = \min\left\{k : \sum_{j=1}^k \widehat{\mathrm{Var}}(R\mid X=x_{\sigma(j)}) \ge \tau\right\}.
$$
The filtered objective becomes $\mathcal{L}_{\rho}(\theta) = \frac{1}{k^*}\sum_{i\in S}\sum_{j\in\mathcal{B}_i} L_\theta(\xi_j)$, where $\mathcal{B}_i$ is the set of samples in group $i$. This adaptive selection naturally concentrates updates on high-signal prompts while automatically adjusting the kept count based on the variance distribution. Other filtering strategies (top-$k$, min-$p$) and implementation details are in Appendix~\ref{app:filtering}.

\begin{table}[t]
\centering
\small
\setlength{\tabcolsep}{4pt}
\renewcommand{\arraystretch}{1.05}
\caption{Summary of the features of the environments used.}
\begin{tabular}{lcccc}
\toprule
\textbf{Task} & \textbf{Stochastic} & \textbf{Multi-turn} & \textbf{State} & \textbf{Reward} \\
\midrule
Sokoban    & \xmark & \cmark & Grid & Dense  \\
FrozenLake & \cmark & \cmark & Grid & Binary \\
MetaMathQA & \xmark & \cmark & Text & Dense  \\
Countdown  & \xmark & \xmark & Text & Binary \\
SearchQA   & \xmark & \cmark & Text & Dense  \\
WebShop    & \xmark & \cmark & Text & Dense  \\
DeepCoder  & \xmark & \xmark & Text & Dense  \\
\bottomrule
\end{tabular}
\label{tab:testbed}
\end{table}


\section{Experiments}
\label{sec:experiments}

We first establish that template collapse occurs reliably across training configurations (\Cref{sec:collapse_phenomenon}), then evaluate SNR-Aware Filtering as an intervention across tasks, algorithms, model scales, and modalities (\Cref{sec:rv_results}; Table~\ref{tab:1}).

\subsection{Experimental Testbed}
\label{sec:testbed}

We adopt the RAGEN~\cite{wang2025ragenunderstandingselfevolutionllm} testbed and evaluate LLM agents on four controllable tasks that stress complementary decision-making regimes: irreversible planning (Sokoban), sparse-reward long-horizon navigation under stochastic transitions (FrozenLake), and symbolic math reasoning (MetaMathQA, Countdown). To further evaluate multi-turn reasoning and decision-making capabilities, we also include SearchQA~\cite{rllm2025}, WebShop~\cite{yao2022webshop}, and DeepCoder~\cite{mattern2025synthetic1,li2023taco,jain2024livecodebench} (see Appendix~\ref{app:envs} for detailed descriptions).

\paragraph{Environments and tasks.}
Our testbed spans seven diverse environments with complementary characteristics (Table~\ref{tab:testbed}). \textbf{Sokoban} is a grid puzzle where the agent pushes boxes onto target cells; actions are effectively irreversible since boxes cannot be pulled \citep{schrader2018gymsokoban}. \textbf{FrozenLake} is a navigation task with sparse rewards and stochastic transitions (slippery dynamics) \citep{brockman2016openai_gym}. \textbf{MetaMathQA} is a math QA task derived from MetaMathQA \citep{yu2023metamath} where the agent may revise answers over multiple attempts, and we apply a diminishing reward across retries (halving each retry). \textbf{Countdown} is a single-turn numbers game \citep{katz2025countdown} where the agent constructs an arithmetic expression to hit a target. \textbf{SearchQA} is a multi-turn question-answering task where the agent iteratively searches and synthesizes information to answer complex queries \citep{rllm2025}. \textbf{WebShop} is an interactive web navigation task where the agent must search and purchase products matching user specifications \citep{yao2022webshop}. \textbf{DeepCoder} is a code synthesis challenge where the agent generates program solutions to meet specified input-output requirements \citep{mattern2025synthetic1,li2023taco,jain2024livecodebench}.

\paragraph{Training and evaluation setup.}
We train Qwen2.5-3B \citep{qwen2024qwen25} with the veRL/HybridFlow stack \citep{sheng2024hybridflow}, following RAGEN~\cite{wang2025ragenunderstandingselfevolutionllm} defaults unless otherwise stated. We compare PPO \citep{schulman2017proximalpolicyoptimizationalgorithms}, DAPO \citep{yu2025dapo}, GRPO \citep{shao2024deepseekmathpushinglimitsmathematical}, and Dr.\ GRPO \citep{liu2025understandingr1zero} for up to 400 rollout--update iterations. Each iteration collects \(K = P \times G = 128\) trajectories per environment, with prompt batch size \(P=8\) and group size \(G=16\) trajectories per prompt. When applying SNR-Aware Filtering with keep rate $\rho$, we reduce the effective minibatch size accordingly and scale the per-step loss by $\rho$, so the optimization step size remains comparable.

\FloatBarrier
\subsection{Template Collapse as a Consistent Failure Mode}
\label{sec:collapse_phenomenon}

Across all training configurations, RL-trained agents reliably develop reasoning that is fluent but input-agnostic: $I(X;Z)$ declines while $H(Z\mid X)$ remains high, and this drift is invisible to entropy-based monitoring.

\textbf{Observing template collapse through MI dynamics.}
We track three key metrics during training: task success rate, our MI proxy $\widehat{I}(X;Z)$ (Retrieval-Acc), and conditional entropy $H(Z\mid X)$ (Figure~\ref{fig:F02}). We present dynamics for all MI proxies in Appendix~\ref{app:mi-proxies}. The trajectory reveals a critical pattern: mutual information declines significantly before task performance degrades, while conditional entropy remains elevated throughout. This divergence is the hallmark of template collapse. Reasoning appears diverse within each input (high $H(Z\mid X)$) but becomes increasingly input-agnostic across inputs (low $I(X;Z)$).

The early decline of $\widehat{I}(X;Z)$ demonstrates that our MI proxy serves as an early warning signal, detecting reasoning degradation that entropy-based metrics miss entirely. This finding motivates using MI as a primary diagnostic alongside task performance, rather than relying solely on entropy for process monitoring.

\begin{figure}[t]
    \centering
    \includegraphics[width=\linewidth]{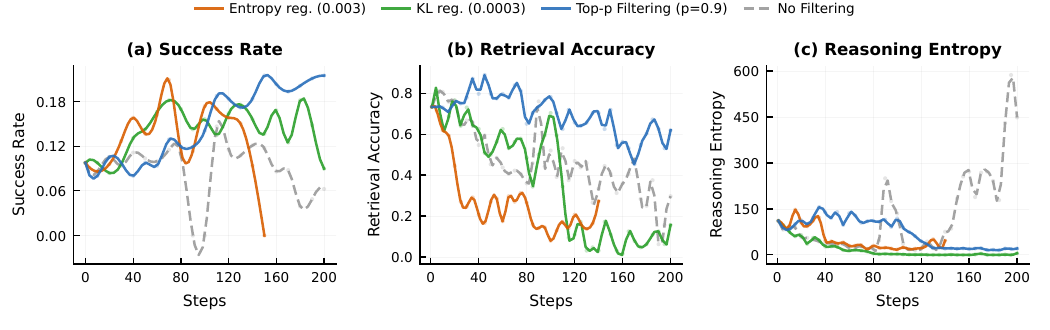}
    \caption{Training dynamics under different intervention strategies. (a)~Task success rate, (b)~MI proxy (retrieval accuracy), and (c)~reasoning entropy. Without filtering, MI degrades early while entropy spikes, signaling template collapse. Filtering effectively mitigates the decline in retrieval accuracy, with top-p SNR-Aware filtering best preserving both task performance and reasoning diversity.}
    \label{fig:F02}
    \vspace*{-1em}
\end{figure}

\begin{figure}[t]
    \centering
    \includegraphics[width=1\linewidth]{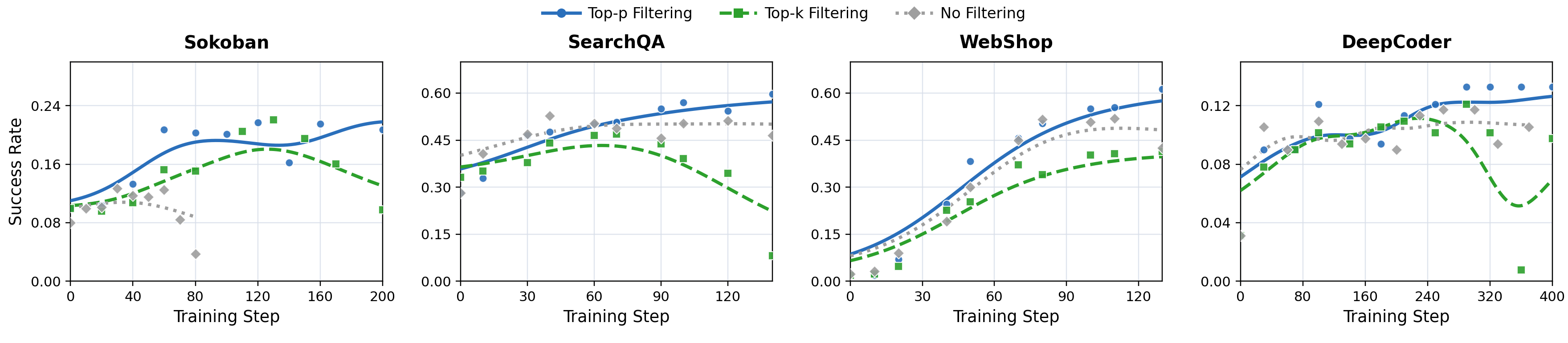}
    \caption{Comparison of filtering strategies showing Top-$p$ consistently outperforming Top-$k$ and no-filter baselines across four environments.}
    \label{fig:F17}
    \vspace*{-1em}
\end{figure}

\textbf{Behavioral manifestation of template collapse.}
Beyond diagnostic metrics, template collapse manifests behaviorally as systematic reasoning compression. We do a more broadly evaluation by reproducing several experiments from existing evaluations spanning spatial agents~\cite{yin2025mindcube}, logic puzzle agents~\cite{chen2025internalizingworldmodelsselfplay}, visual agents~\cite{wang2025vagen}, and math agents~\cite{liu2026ufo}.
Figure~\ref{fig:F13} shows that reasoning length declines monotonically across all eight environments. As agents converge toward reusable templates, they produce shorter, more formulaic outputs—a behavioral signature of template collapse that complements  MI-based diagnostics.

\begin{table*}[t]
\caption{SNR-Aware Filtering results (\%) across algorithms, model scales, types, and modalities. Each cell reports baseline peak with filter delta in parentheses; Qwen2.5-VL-3B includes text (T) and image (V) inputs. Filtering improves average score across all variants.}
\label{tab:1}
\centering
\small
\setlength{\tabcolsep}{3pt}
\renewcommand{\arraystretch}{1.10}

\begin{tabular*}{\textwidth}{@{\extracolsep{\fill}}>{\raggedright\arraybackslash}p{4.6cm}
*{5}{>{\centering\arraybackslash}m{1.75cm}}}
\toprule
\textbf{Experiment Variants} &
\textbf{Sokoban} &
\textbf{FrozenLake} &
\textbf{MetaMathQA} &
\textbf{Countdown} &
\textbf{Average} \\
\midrule

\rowcolor{blkA}
\multicolumn{6}{@{}l}{\textbf{Baseline}} \\
PPO \citep{schulman2017proximalpolicyoptimizationalgorithms}, \\ Qwen2.5-3B \citep{qwen2024qwen25}
& 12.9 \gainpos{(+16.0)}
& 67.0 \gainpos{(+10.9)}
& 92.6 \gainpos{(+0.6)}
& 97.9 \gainother{(+0.0)}
& 67.6 \gainpos{(+6.9)} \\
\addlinespace[2pt]

\rowcolor{blkB}
\multicolumn{6}{@{}l}{\textbf{Algorithm}} \\
DAPO \citep{yu2025dapo}
& 16.2 \gainpos{(+5.1)}
& 66.8 \gainpos{(+2.1)}
& 90.8 \gainpos{(+2.8)}
& 95.7 \gainpos{(+1.6)}
& 67.4 \gainpos{(+2.9)} \\
GRPO \citep{shao2024deepseekmathpushinglimitsmathematical}
& 12.1 \gainpos{(+9.0)}
& 70.9 \gainpos{(-3.0)}
& 91.2 \gainpos{(+1.2)}
& 95.7 \gainpos{(+2.2)}
& 67.5 \gainpos{(+3.7)} \\
Dr.\ GRPO \citep{liu2025understandingr1zero}
& 12.1 \gainother{(-0.4)}
& 23.2 \gainpos{(+0.6)}
& 91.2 \gainpos{(+1.4)}
& 96.5 \gainpos{(+1.4)}
& 55.8 \gainpos{(+0.8)} \\
\addlinespace[2pt]

\rowcolor{blkA}
\multicolumn{6}{@{}l}{\textbf{Model Scale (PPO)}} \\
Qwen2.5-0.5B \citep{qwen2024qwen25}
& 3.3 \gainpos{(+22.9)}
& 19.5 \gainpos{(+0.0)}
& 10.0 \gainother{(-0.2)}
& 23.0 \gainother{(-0.7)}
& 14.0 \gainpos{(+5.5)} \\
Qwen2.5-1.5B \citep{qwen2024qwen25}
& 17.0 \gainpos{(+6.2)}
& 36.5 \gainpos{(+1.6)}
& 80.3 \gainpos{(+7.0)}
& 56.6 \gainpos{(+1.6)}
& 47.6 \gainpos{(+4.1)} \\
Qwen2.5-7B \citep{qwen2024qwen25}
& 42.4 \gainpos{(+4.9)}
& 85.0 \gainother{(-0.6)}
& 84.0 \gainpos{(+11.7)}
& 97.7 \gainpos{(+0.3)}
& 77.3 \gainpos{(+4.1)} \\
\addlinespace[2pt]

\rowcolor{blkC}
\multicolumn{6}{@{}l}{\textbf{Model Type}} \\
Qwen2.5-3B-Instruct \citep{qwen2024qwen25}
& 22.5 \gainpos{(+14.2)}
& 83.6 \gainpos{(+2.3)}
& 91.2 \gainpos{(+0.4)}
& 96.3 \gainother{(-0.6)}
& 73.4 \gainpos{(+4.1)} \\
Llama3.2-3B \citep{meta2024llama32card}
& 24.4 \gainpos{(+18.8)}
& 84.6 \gainother{(-0.2)}
& 86.1 \gainpos{(+3.7)}
& 99.2 \gainother{(-1.2)}
& 73.6 \gainpos{(+5.3)} \\
\addlinespace[2pt]

\rowcolor{blkB}
\multicolumn{6}{@{}l}{\textbf{Modality (Input Type)}} \\
Qwen2.5-VL-3B (T) \citep{qwen2025qwen25vl}
& 53.0 \gainpos{(+6.0)}
& 16.0 \gainpos{(+53.5)}
& -
& -
& 34.5 \gainpos{(+29.8)} \\
Qwen2.5-VL-3B (V) \citep{qwen2025qwen25vl}
& 65.0 \gainpos{(+12.0)}
& 19.5 \gainpos{(+59.5)}
& -
& -
& 42.3 \gainpos{(+35.8)} \\
\bottomrule
\end{tabular*}
\vspace*{-0.3em}
\end{table*}

\begin{figure}[t]
    \centering
    \includegraphics[width=\linewidth]{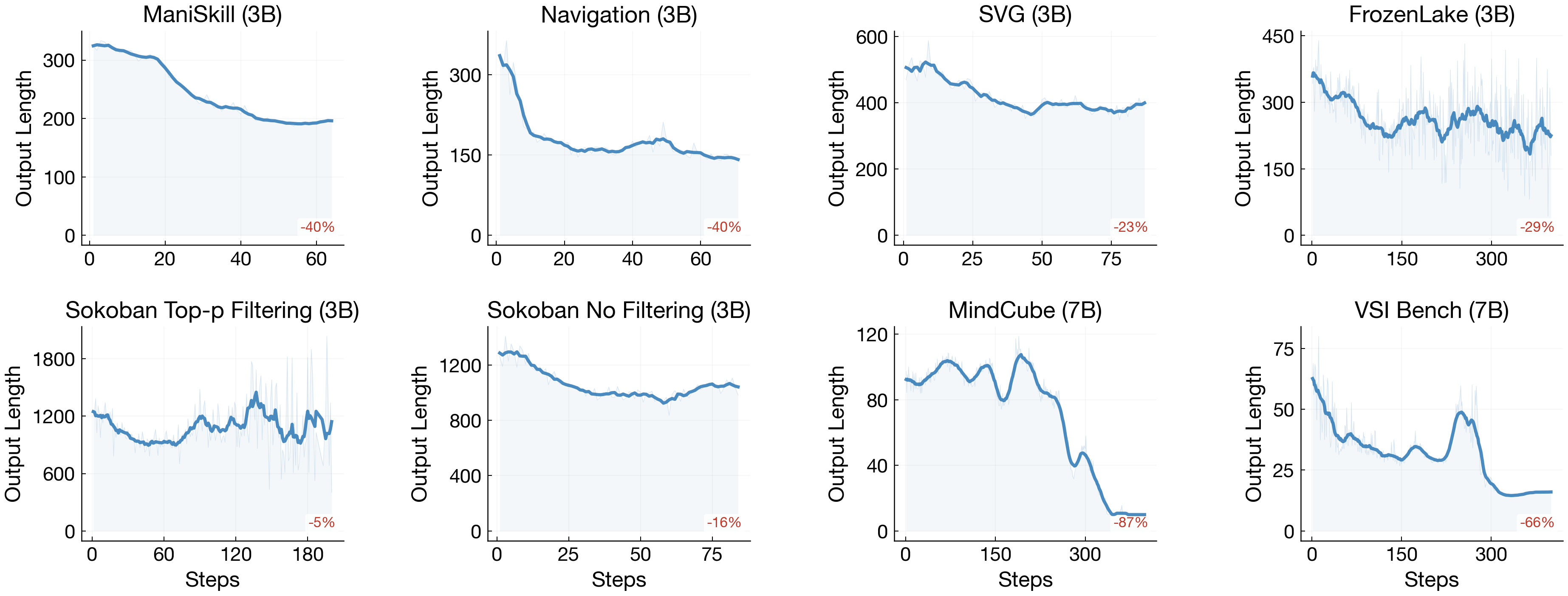}
    \caption{Reasoning length decline across eight environments, showing systematic compression as a behavioral signature of template collapse.}
    \label{fig:F13}
    \vspace*{-1em}
\end{figure}

\subsection{SNR-Aware Filtering Consistently Improves Performance}
\label{sec:rv_results}

\textbf{Comparing filtering strategies across environments.}
To evaluate the effectiveness of different filtering approaches, we compare three strategies: Top-p (nucleus-style) filtering, Top-k (fixed-count) filtering, and no filtering baseline. Figure~\ref{fig:F17} shows task success rates across four representative environments (Sokoban, FrozenLake, MetaMathQA, Countdown).

Top-p filtering consistently achieves higher success rates throughout training compared to both alternatives. The advantage over Top-k filtering is particularly noteworthy: while both methods prioritize high-variance prompts, Top-p's adaptive selection naturally adjusts to the variance distribution, rejecting entire batches when most prompts carry weak signal. In contrast, Top-k retains a fixed fraction regardless of signal quality, potentially including low-quality updates that dilute the training signal.

The no-filter baseline shows the weakest performance. This confirms that indiscriminate updates on all prompts, including those with near-zero reward variance, systematically degrades learning. These results motivate our choice of Top-p filtering as the primary SNR-Aware mechanism, with more comprehensive cross-environment results reported in Table~\ref{tab:1}.

Table~\ref{tab:1} summarizes our experimental matrix over four tasks, multiple RL algorithms, model scales/types, and input modalities.
Across this grid, SNR-Aware Filtering yields two consistent effects.
First, it improves peak task success rate in most settings (reported as the \(+\Delta\) next to each peak), demonstrating that prioritizing high-signal updates strengthens learning efficiency.
Second, the gains span multiple experimental axes, including (i) the RL optimizer (PPO / DAPO / GRPO / Dr.\ GRPO), (ii) the model family and scale (Qwen2.5 from 0.5B to 7B; Llama3.2-3B), and (iii) the input modality (text- and image-conditioned Qwen2.5-VL).
Here, DAPO and Dr.\ GRPO are recent strong baselines that directly target stable training and mitigate collapse-like failure modes.
In Table~\ref{tab:1}, DAPO ``no-filter'' results correspond to the original algorithms without our filtering applied.
DAPO itself also includes a filtering/acceptance step; it can be interpreted as a special case of our framework where the selection is fixed (equivalently, a top-$P$ filter with $P\to 1.0$), while our SNR-Aware Filtering provides an explicit, tunable SNR knob via the keep rate $\rho$.
This breadth suggests SNR-Aware Filtering serves as a general-purpose SNR control knob and works alongside standard stabilization terms (e.g., $\mathrm{KL}$ and entropy regularization).

\textbf{Compute overhead of group sampling.} SNR-Aware Filtering requires at least $G{=}2$ trajectories per prompt (group size) to estimate per-prompt RV. Since the total rollout budget is fixed at $K{=}128$ trajectories, varying the prompt batch size $P$ and group size $G$ is a repartitioning of that budget — all configurations incur identical rollout cost. Table~\ref{tab:pg_sweep} shows performance and wall-clock step time across configurations on Sokoban (Qwen2.5-3B). RV computation itself adds ${<}0.1\%$ of iteration time. With filtering ($\rho{=}0.9$), fewer groups enter gradient computation, reducing per-step time by 26--41\%. Configurations with group size $G \geq 4$ and SNR-Aware Filtering match or outperform the $128{\times}1$ baseline, confirming that the gains come at no additional compute cost.

\begin{table}[t]      
\centering
\small                                                  \caption{Sweep over prompt batch size $P$ and group size $G$ (trajectories per prompt) on Sokoban (Qwen2.5-3B).
NF\,=\,no filtering; F\,=\,SNR-Aware Filtering ($\rho{=}0.9$). Total rollout budget is fixed at 128 trajectories across   all configurations.}
\label{tab:pg_sweep}                                                                                                    
\resizebox{\linewidth}{!}{%
\begin{tabular}{lcccccccccc}
\toprule                                                                                                                  \multirow{2}{*}{\textbf{$P$ (prompts) $\times$ $G$ (traj/prompt)}} &
\multicolumn{3}{c}{\textbf{Task Perf.\ (\%)}} &                                                                           \multicolumn{3}{c}{\textbf{Step Time (s)}} &
\multicolumn{3}{c}{\textbf{VRAM (GB)}} \\                                                                               
\cmidrule(lr){2-4}\cmidrule(lr){5-7}\cmidrule(lr){8-10}                                                                   & NF & F & $\Delta$ & NF & F & $\Delta$\% & NF & F & $\Delta$ \\
\midrule                                                                                                                  $128 \times 1$ & 23.6 & --   & --     & 89.8 & --   & --      & 201.80 & --     & --     \\
$64  \times 2$ & 18.8 & 27.3 & $+8.6$ & 91.8 & 64.9 & $-29\%$ & 201.39 & 201.83 & $+0.44$ \\                            
$32  \times 4$ & 24.2 & 27.4 & $+3.2$ & 89.8 & 52.6 & $-41\%$ & 202.11 & 201.67 & $-0.44$ \\                            
$8   \times 16$& 15.6 & 23.6 & $+8.0$ & 89.2 & 65.9 & $-26\%$ & 201.54 & 201.90 & $+0.36$ \\                            
\bottomrule                                                                                                               \end{tabular}}                                                                                                          
\end{table} 

\FloatBarrier

\section{Analysis}
\label{sec:analysis}

\subsection{MI Diagnoses Collapse Better Than Entropy Across All Interventions}
\label{sec:mi_diagnosis}

We demonstrate that MI separates high- and low-performance runs across all three intervention families better, and entropy could conflate them. At the same training budget, stronger SNR-Aware Filtering moves runs toward higher MI and better performance; KL and entropy tuning shift entropy without moving MI. We sweep three families of interventions (entropy regularization strength, $\mathrm{KL}$ constraint strength, and SNR-Aware Filtering keep rate) and compare their trajectories in both diagnostic spaces at fixed training steps (Figure~\ref{plot:2}). Entropy- and $\mathrm{KL}$-based stabilizers induce larger changes in $H(Z\mid X)$ than in $\widehat{I}(X;Z)$, and rarely move the model into the high-$\widehat{I}(X;Z)$ regime with clearly improved performance. In contrast, SNR-Aware Filtering traces a monotone improvement in both $\widehat{I}(X;Z)$ and task success; pushing entropy too high leads to instability and performance collapse, while $\mathrm{KL}$ constraint mainly anchors the policy near its reference distribution without boosting input dependence.

We compute Spearman correlation between task success rate and each candidate diagnostic across runs with varying entropy regularization strength, KL constraint strength, and Top-p filtering kept mass (Figure~\ref{fig:F08}). MI-family metrics achieve positive correlations, with Trajectory MI-ZScore reaching $+0.39$. In contrast, Reasoning Entropy and Conditional Entropy metrics show near-zero or negative correlations (between $-0.11$ and $-0.14$). This confirms that MI predicts performance twice as reliably as entropy does, and entropy actually points in the wrong direction. These results validate MI as a superior training monitor compared to entropy-based diagnostics for multi-turn agent RL.

\begin{figure}[t]
    \centering
    \includegraphics[width=0.88\linewidth]{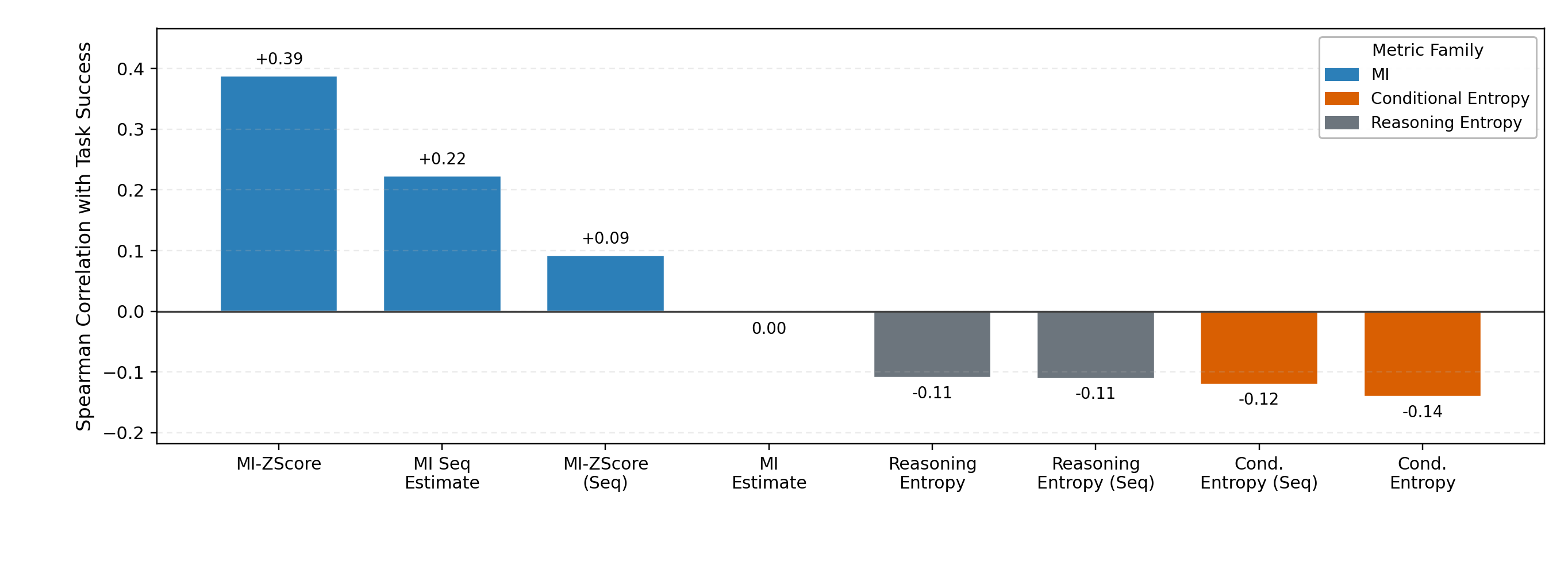}
    \caption{Spearman correlations showing MI-family metrics positively predict performance while entropy metrics are near-zero or negative.}
    \label{fig:F08}
    \vspace*{-1em}
\end{figure}

The no-filter baseline shows the weakest performance. This confirms that indiscriminate updates on all prompts, including those with near-zero reward variance, systematically degrades learning. These results motivate our choice of Top-p filtering as the primary SNR-Aware mechanism, with more comprehensive cross-environment results reported in Table~\ref{tab:1}.


Table~\ref{tab:1} summarizes our experimental matrix over four tasks, multiple RL algorithms, model scales/types, and input modalities.
Across this grid, SNR-Aware Filtering yields two consistent effects.
First, it improves peak task success rate in most settings (reported as the \(+\Delta\) next to each peak), demonstrating that prioritizing high-signal updates strengthens learning efficiency.
Second, the gains span multiple experimental axes, including (i) the RL optimizer (PPO / DAPO / GRPO / Dr.\ GRPO), (ii) the model family and scale (Qwen2.5 from 0.5B to 7B; Llama3.2-3B), and (iii) the input modality (text- and image-conditioned Qwen2.5-VL).
Here, DAPO and Dr.\ GRPO are recent strong baselines that directly target stable training and mitigate collapse-like failure modes.
In Table~\ref{tab:1}, DAPO ``no-filter'' results correspond to the original algorithms without our filtering applied.
DAPO itself also includes a filtering/acceptance step; it can be interpreted as a special case of our framework where the selection is fixed (equivalently, a top-$P$ filter with $P\to 1.0$), while our SNR-Aware Filtering provides an explicit, tunable SNR knob via the keep rate $\rho$.
This breadth suggests SNR-Aware Filtering serves as a general-purpose SNR control knob and works alongside standard stabilization terms (e.g., $\mathrm{KL}$ and entropy regularization).

\begin{figure}[t]
    \centering
    \includegraphics[width=0.7\linewidth]{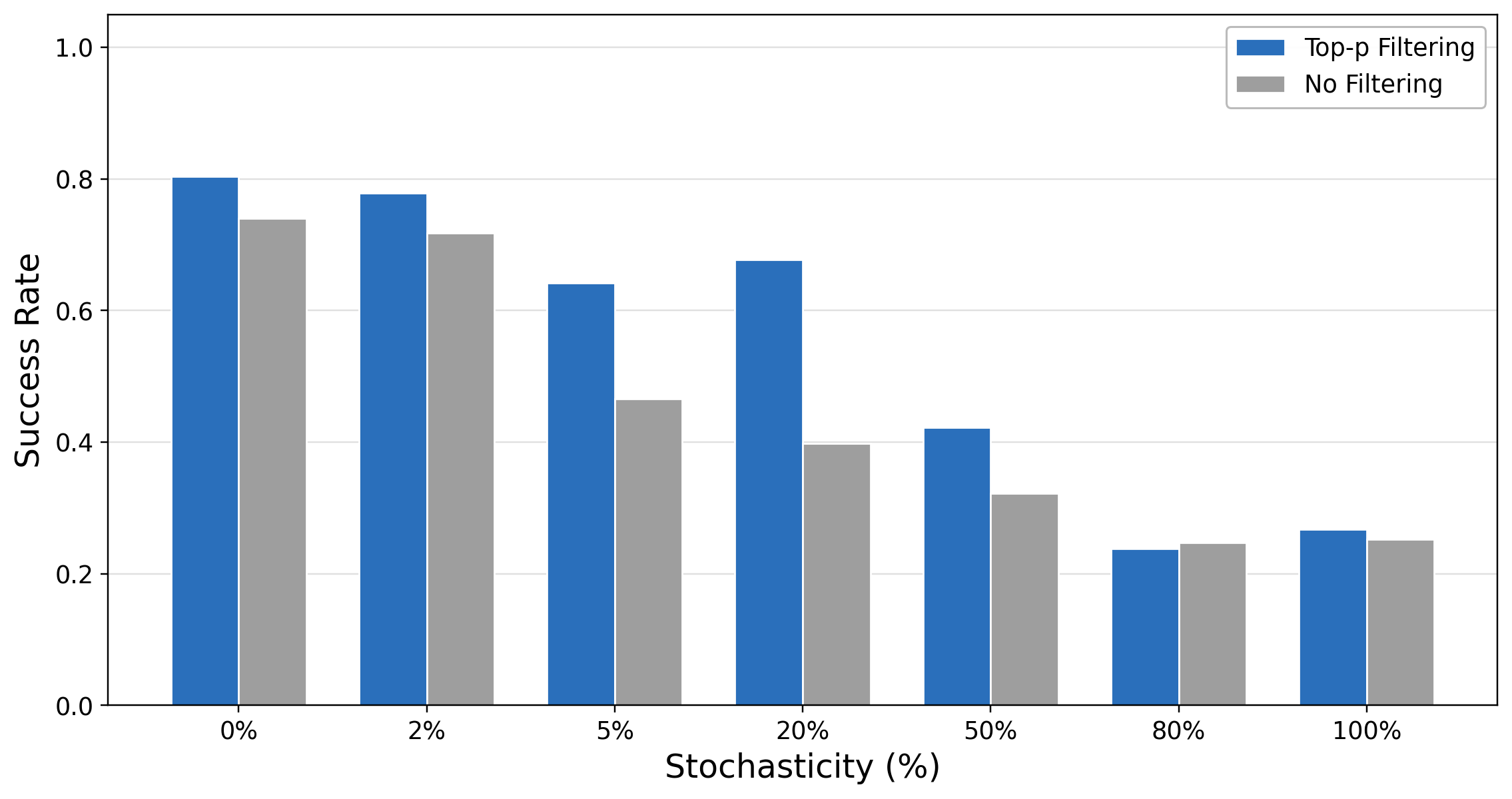}
        \caption{In FrozenLake, median success rates for both Top-p filtering (orange) and no filtering (gray) decrease as environment stochasticity increases from 0\% to 100\%. SNR-Aware Filtering maintains a clear advantage from 0\% to 50\% stochasticity, but the gap closes at 80\%--100\%, where high transition noise weakens reward variance as an informative signal proxy.}
    \label{plot:3}
    \vspace*{-0.5em}
\end{figure}

\subsection{Does SNR Mechanism Really Interpret Agent RL?}
\label{sec:snr_validation}

The SNR framing makes a concrete causal claim: template collapse is a gradient-level consequence of low reward variance, not a side effect of aggressive regularization or model capacity. We stress-test this claim with four questions: (1) Does directly controlling RV level causally drive performance and MI? (2) Does injecting environmental noise predictably weaken MI? (3) Do gains come from signal quality rather than prompt-distribution bias? (4) When does the filtering condition hold in practice? A positive answer to all four makes the SNR account difficult to dismiss.

\textbf{Quartile ablation provides direct causal evidence.} To move beyond correlation between RV and performance, we run a controlled intervention. We sort all prompt groups by within-prompt RV, divide them into four quartiles (Q1 = highest, Q4 = lowest), and train four separate runs — each updating on one quartile only, all other settings fixed (Table~\ref{tab:quartile_ablation}). Task performance and MI degrade monotonically from Q1 to Q4. Combined with Theorem G.1 ($\|g_{\text{task}}\| \leq \sqrt{\text{RV}}$), this establishes the full causal chain: reward variance $\to$ gradient quality $\to$ input-dependent reasoning.

\begin{table}[h]
\centering
\small
\caption{Quartile ablation on Sokoban (Qwen2.5-3B, $P{=}8$, $G{=}16$, keeping 25\% of prompts per step). Task performance and MI degrade monotonically from Q1 to Q4.}
\label{tab:quartile_ablation}
\begin{tabular}{llccc}
\toprule
\textbf{Quartile} & \textbf{RV Range} & \textbf{Task Perf (\%)} & \textbf{MI Proxy} & \textbf{Entropy} \\
\midrule
Q1 (highest RV) & [4.4--5.6] & 21.1 & 0.95 & 2.02 \\
Q2              & [1.5--4.2] & 19.5 & 0.93 & 1.53 \\
Q3              & [0.0--0.2] & 10.7 & 0.81 & 1.41 \\
Q4 (lowest RV)  & [0.0--0.1] & 11.0 & 0.73 & 1.87 \\
\bottomrule
\end{tabular}
\end{table}

\textbf{Controlled noise injection weakens MI.} We run a direct intervention: varying environmental stochasticity and asking whether MI declines \emph{predictably} in response. As environment and policy randomness increases, task return drops, conditional entropy rises, and $\widehat{I}(X;Z)$ decreases monotonically (Figure~\ref{plot:3}). This is the expected consequence of the SNR chain. Additional noise inflates within-prompt return variance in a signal-free way, diluting the advantage estimates that task gradients depend on. Importantly, the filter's advantage also attenuates at very high noise (80--100\%), which is itself informative: when the environment is so stochastic that even high-effort prompts yield noisy rewards, RV loses its discriminative power. The mechanism predicts exactly this boundary condition.

\textbf{Prompt-level filtering outperforms trajectory-level filtering.} The gains from SNR-Aware filtering could come from selecting discriminative prompts, or from discarding hard/noisy trajectories. We disentangle these with a trajectory-level baseline: we keep all prompts but retain only the top-8 and bottom-8 trajectories per prompt by reward, preserving the prompt distribution while improving per-prompt SNR (Table~\ref{tab:traj_vs_prompt}). Trajectory-level filtering improves over no filtering. However, prompt-level SNR-Aware Filtering outperforms it by a wider margin. Within a naturally low-RV prompt, forcing within-prompt variance by sub-selecting trajectories amplifies noise. Selecting prompts that naturally produce discriminative signals is more effective.

\begin{table}[h]
\centering
\small
\caption{Trajectory-level vs.\ prompt-level filtering on Sokoban (Qwen2.5-3B). Prompt-level SNR-Aware Filtering provides the largest gains; trajectory-level filtering confirms that the benefit is not due to prompt-distribution bias.}
\label{tab:traj_vs_prompt}
\begin{tabular}{lcccc}
\toprule
\textbf{Method} & \textbf{Prompts Used} & \textbf{Traj/Update} & \textbf{Task Perf (\%)} & \textbf{MI Proxy} \\
\midrule
No filter                     & 8/8   & 128  & 12.9 & 0.83 \\
Prompt-level RV ($\rho{=}0.9$) & 3.2/8 & 50.6 & 23.6 & 1.80 \\
Trajectory-level              & 8/8   & 64   & 16.8 & 0.20 \\
\bottomrule
\end{tabular}
\end{table}

\textbf{When does SNR-Aware Filtering help?} Finally, we find SNR-Aware Filtering improves performance better when cross-prompt RV heterogeneity is large enough to separate signal-rich from noise-only prompts. We find the metric Std(RV)/Mean(RV), computable from a single rollout batch, can effectively predict this (Table~\ref{tab:rv_applicability}). When the ratio is high, the per-prompt RV distribution is bimodal and filtering cleanly separates signal from noise. When the ratio is near zero, all prompts carry similar RV and filtering discards data uniformly, like FrozenLake GRPO ($\Delta{=}{-5.0\%}$, ratio $0.33$). This ratio is a cheap diagnostic which can be done before training.

\begin{table}[h]
\centering
\caption{Per-setting RV statistics and filtering effectiveness. Std/Mean of RV predicts whether SNR-Aware Filtering helps: high ratio means bimodal RV and effective filtering; low ratio means uniform RV and random discarding.}
\label{tab:rv_applicability}
\resizebox{\linewidth}{!}{%
\begin{tabular}{lcccccccccc}
\toprule
\textbf{Setting} & \textbf{Filter $\Delta$} & \textbf{P} & \textbf{G} & \textbf{RV Mean} & \textbf{RV Std} & \textbf{RV Var} & \textbf{RV Min} & \textbf{RV Max} & \textbf{Std/Mean} \\
\midrule
Sokoban, 14B          & $+4.6\%$ & 8  & 8 & 2.24 & 2.88 & 8.32 & 0.10 & 6.00 & 1.29 \\
Sokoban, 3B           & $+3.2\%$ & 32 & 4 & 2.49 & 2.89 & 8.35 & 0.05 & 6.52 & 1.16 \\
FrozenLake, 3B (GRPO) & $-5.0\%$ & 32 & 8 & 0.54 & 0.18 & 0.03 & 0.22 & 0.76 & 0.33 \\
\bottomrule
\end{tabular}}
\end{table}

\textbf{How filtering adapts as training progresses?} With the four predictions confirmed, we can now characterize how SNR-Aware Filtering behaves over the full training trajectory. Figure~\ref{fig:F04} tracks the effective kept ratio $\rho_{\text{eff}}$ and zero-variance prompt count over training. Both move in the expected direction: as the policy improves and converges, more prompts yield near-identical rollout rewards (zero-variance count rises), and the filter responds by becoming more selective (kept ratio falls). This automatic tightening is precisely what a fixed strategy like Top-$k$ with constant $k$ cannot replicate. It would continue absorbing gradient budget from uninformative prompts even as signal quality deteriorates.

\textbf{Reward collapse is visible at the distribution level.} Figure~\ref{fig:F16} provides a complementary view of the same dynamics, tracking prompt-level reward distributions across early, mid, and late training in Sokoban. The shift is systematic: the hard portion shrinks as the policy improves, the mixed portion expands, and overall prompt-level variance collapses toward the late stages. This distribution-level signature mirrors the gradient-level story. Late training is not simply "easier" for the policy; it is a regime where reward variation has been compressed to the point that gradient updates carry progressively less task-discriminative information.

\begin{figure}[t]
    \centering
    \includegraphics[width=0.88\linewidth]{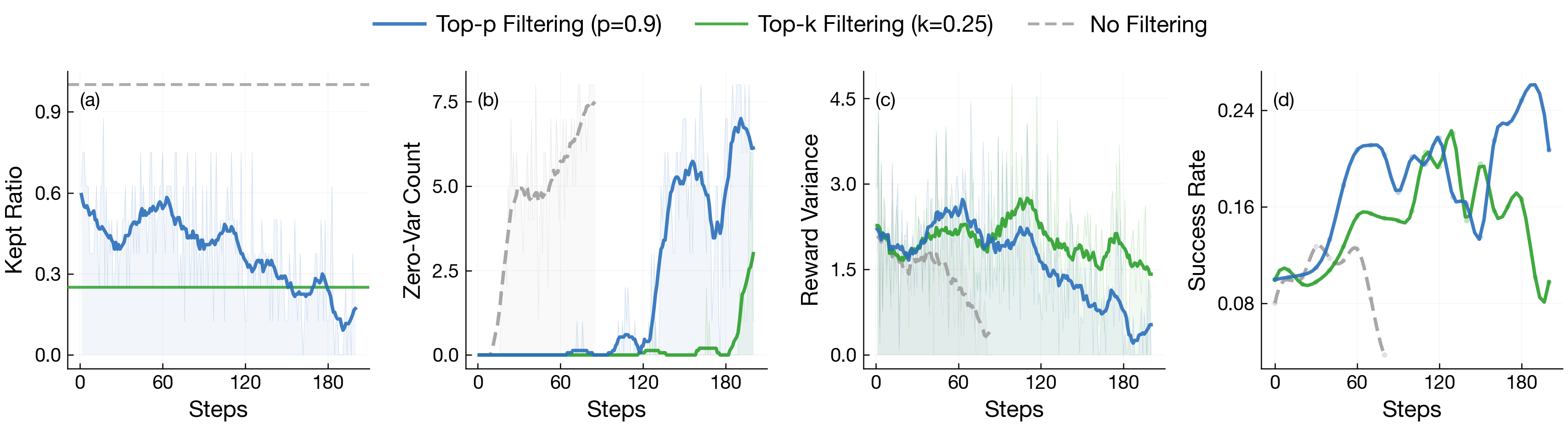}
    \caption{Effective kept ratio and zero-variance prompt count, showing adaptive selection pressure as variance collapses during training.}
    \label{fig:F04}
    \vspace*{-1em}
\end{figure}

\textbf{Format validity cannot substitute for content-sensitive diagnostics.} One might hope that a coarser signal (whether the model's output follows the required format) could serve as a collapse indicator without the overhead of MI estimation. Figure~\ref{fig:F9} shows this does not hold: format validity is largely decoupled from collapse, with runs maintaining near-perfect validity while exhibiting low MI. Structural correctness and semantic input-dependence are separate dimensions. This reinforces the need for content-sensitive diagnostics, and explains why the MI proxy provides signal that format-based checks miss.

\begin{figure}[t]
    \centering
    \includegraphics[width=\linewidth]{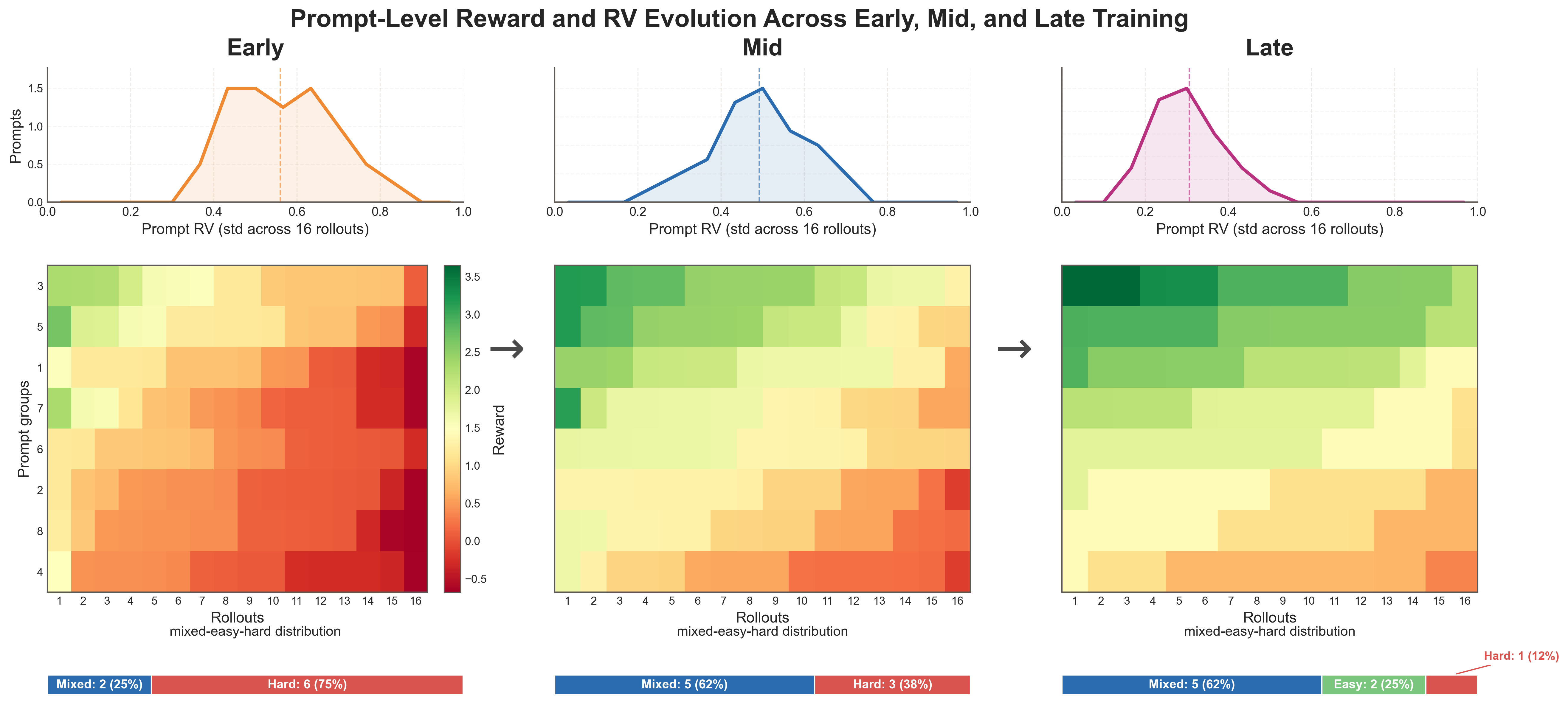}
    \caption{Prompt-level reward distribution across training phases, showing RV collapse as prompts shift toward uniform reward structures.}
    \label{fig:F16}
    \vspace*{-1em}
\end{figure}

RV is largely orthogonal to entropy and response length, which explains why entropy-based stabilizers cannot prevent template collapse. Reward variance correlates weakly with conditional entropy (Spearman $-0.14$) and response length ($0.12$), while correlating strongly with task reward ($0.63$). RV therefore targets a distinct axis of update quality rather than surface statistics, making it a complementary control knob to $\mathrm{KL}$ and entropy regularization. Figure~\ref{fig:F04} further shows that the effective kept ratio adapts over training: as more prompts drift toward near-zero RV, the filter automatically concentrates gradient updates on the shrinking pool of still-informative prompts.
\begin{figure}[t]
    \centering
    \includegraphics[width=\linewidth]{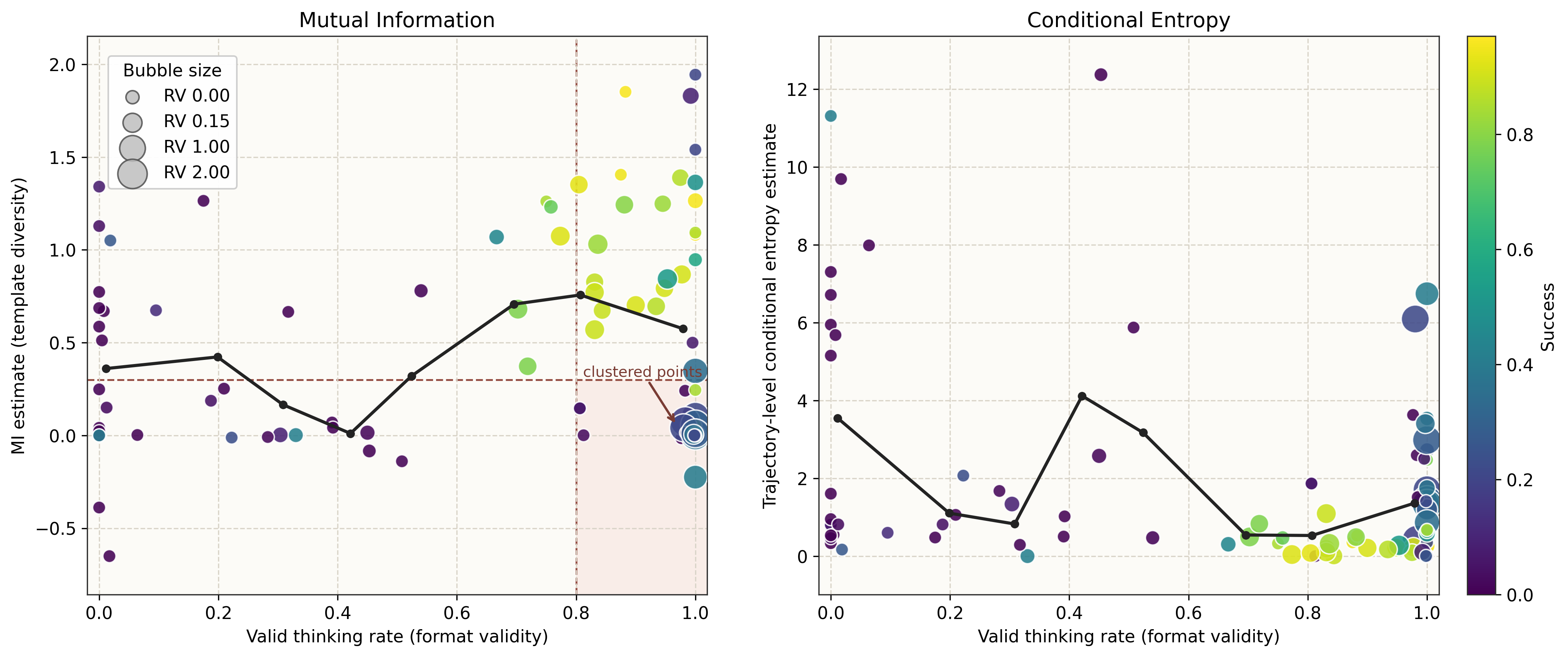}
    \caption{Format validity versus MI and entropy diagnostics, showing that high validity does not guarantee high input dependence.}
    \label{fig:F9}
    \vspace*{-1em}
\end{figure}

\textbf{What is the relationship between SNR-Aware Filtering and KL/entropy tuning stabilization?}
When training RL agents, practitioners typically tune KL penalty and entropy regularization coefficients to maintain training stability and prevent mode collapse. However, these interventions primarily control within-input diversity ($H(Z\mid X)$) and cannot directly address the signal-to-noise imbalance that drives template collapse. Even with carefully tuned regularization, if most prompts have low reward variance, the task gradient remains weak and regularization forces still dominate the update direction.

SNR-Aware Filtering is complementary: it selects high-signal prompts at each iteration, directly boosting the fraction of task-discriminative gradient in each update. This acts as a signal-enhancement mechanism rather than a noise-control mechanism. We provide a detailed empirical comparison of KL tuning, entropy tuning, and SNR-Aware Filtering in \Cref{sec:mi_diagnosis}, showing that the three interventions move training dynamics along different axes (Figure~\ref{plot:2}).

\FloatBarrier

\section{Related Work}

\begin{figure*}[t]
    \centering
    \includegraphics[width=0.9\linewidth]{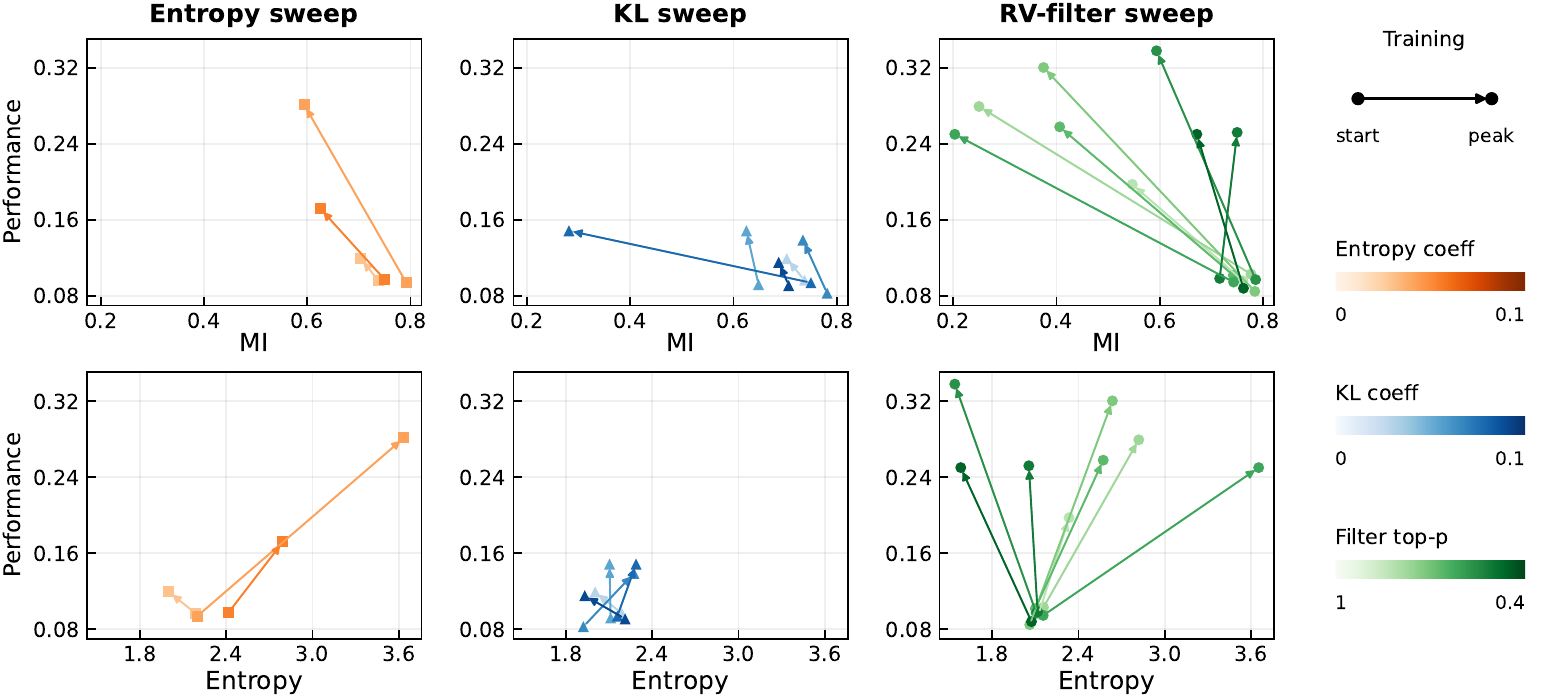}
    \caption{Training dynamics under three interventions. For each setting, we choose two checkpoints (steps 10/400) and connect them into a trajectory (arrows point to later steps). Color intensity indicates weaker to stronger intervention.}
    \label{plot:2}    
    \vspace*{-1em}
\end{figure*}

\textbf{\emph{Reasoning collapse and policy degeneracy in closed-loop LM and Agent RL training.}}

LLM-agent RL reports various collapse phenomena \citep{guo_2025,wei2025gtrguidedthoughtreinforcement}: \emph{reasoning collapse} (rationales becoming templated with weaker input correspondence) \citep{wei2025gtrguidedthoughtreinforcement,yao2025diversityawarepolicyoptimizationlarge,yun2025priceformatdiversitycollapse} and \emph{policy-level degeneracy} (behavior concentrating on easy-to-reproduce patterns) \citep{feng2025groupingrouppolicyoptimizationllm,wang2025practitionersguidemultiturnagentic}. These echo model collapse in self-training, 
even when average metrics appear stable \citep{gerstgrasser2024modelcollapseinevitablebreaking,Shumailov2024AIMC}.

\textbf{\emph{Evaluating reasoning diversity, input dependence, and reasoning faithfulness.}}

Most diversity metrics do not test whether differences are \emph{systematically driven by inputs} \citep{tevet2021evaluatingevaluationdiversitynatural,yun2025priceformatdiversitycollapse}. Common measures include lexical overlap \citep{li2016diversitypromotingobjectivefunctionneural,zhu2018texygenbenchmarkingplatformtext}, embedding dispersion \citep{pillutla2021mauvemeasuringgapneural,tevet2021evaluatingevaluationdiversitynatural}, and uncertainty analyses \citep{montahaei2019jointlymeasuringdiversityquality,semeniuta2019accurateevaluationganslanguage}, primarily capturing within-input variability and missing cross-input shifts \citep{semeniuta2019accurateevaluationganslanguage,tevet2021evaluatingevaluationdiversitynatural}. Recent work probes input dependence via behavioral tests \citep{gardner2020evaluatingmodelslocaldecision,ribeiro2020accuracybehavioraltestingnlp,zhu2024promptbenchunifiedlibraryevaluation} and retrieval-style matching \citep{morris2023languagemodelinversion,gao2024dorydeliberativepromptrecovery,zhang2024extractingpromptsinvertingllm,li2025reversepromptengineering}. Work on \emph{reasoning faithfulness} asks whether explanations reflect true decision bases \citep{lanham2023measuringfaithfulnesschainofthoughtreasoning,turpin2023languagemodelsdontsay,siegel2024probabilitiesmatterfaithfulmetric,zaman2025chainofthoughtreallyexplainabilitychainofthought}. We instead focus on the phenomenon that reasoning may become less input-sensitive after RL.

\textbf{\emph{Stabilizing multi-turn Agent RL under closed-loop sampling.}}

Stability work spans KL control, entropy regularization, clipping, reward shaping, curricula, and replay mixtures \citep{schulman2017trustregionpolicyoptimization,schulman2017proximalpolicyoptimizationalgorithms,schulman2018highdimensionalcontinuouscontrolusing,haarnoja2019softactorcriticalgorithmsapplications,stiennon2022learningsummarizehumanfeedback,ouyang2022traininglanguagemodelsfollow,rafailov2024directpreferenceoptimizationlanguage,sun2024fastbestofndecodingspeculative,feng2025groupingrouppolicyoptimizationllm,wang2025practitionersguidemultiturnagentic,wang2025harnessinguncertaintyentropymodulatedpolicy,xu2025epoentropyregularizedpolicyoptimization,yao2025diversityawarepolicyoptimizationlarge}. For multi-step agents, stepwise rewards and self-correction signals are common \citep{cobbe2021trainingverifierssolvemath,nakano2022webgptbrowserassistedquestionansweringhuman,uesato2022solvingmathwordproblems,madaan2023selfrefineiterativerefinementselffeedback,shinn2023reflexionlanguageagentsverbal,wang2023voyageropenendedembodiedagent,yao2023reactsynergizingreasoningacting,dou2025rerestreflectionreinforcedselftraininglanguage,wei2025gtrguidedthoughtreinforcement}. However, these methods do not prevent drift toward input-agnostic templates: if rollouts receive similar rewards regardless of reasoning quality, gradients carry little information \citep{moskovitz2023confrontingrewardmodeloveroptimization,o'mahony2024attributing,Shumailov2024AIMC,yun2025priceformatdiversitycollapse}. We adopt a SNR view, using reward variance filtering low-signal samples to maintain effective SNR. 

\section{Conclusions and Limitations}

We find closed-loop multi-turn agent RL can fail silently: reasoning drifts toward fluent but input-agnostic boilerplate while conditional entropy remains stable. We define this as \textbf{template collapse}. 
Built on this, the paper makes three contributions. First, we introduce a mutual information (MI) proxy between inputs and reasoning, which interprets template collapse and tracks task performance better than conditional entropy. To explain why collapse occurs, we propose SNR mechanism in RL and show that low within-input reward variance suppresses task gradients and lets regularization forces dominate, pushing policy outputs toward input-agnostic templates. To address this, we introduce SNR-Aware Filtering to prioritize prompts with reward variance before each parameter update, improving performance on average across tasks, model scales, and modalities and can integrate easily with existing training pipelines.

  \textbf{Limitations.} The SNR decomposition assumes task-signal and regularization noise separate cleanly, though they may couple through gradient accumulation in practice. All
  experiments are single-agent; how template collapse propagates in multi-agent RL remains open. A capable model could game the filtering criterion by artificially inflating     
  reward variance, a risk worth monitoring over long training horizons. The method requires reward variance to be a reliable signal proxy, which degrades in sparse or noisy      
  reward environments. Aggressive filtering may narrow exploration coverage; the kept mass requires per-task tuning.


\section{Acknowledgements}
We thank Yuxiang Lin for help with RAGEN infrastructure and environments, and Kyunghyun Cho for insightful discussions on the manuscript.




\clearpage  

{
\small
\bibliographystyle{plain}
\bibliography{main}
}

\appendix
\startcontents[appendices]

\clearpage
\section*{Appendix Contents}
\vspace{0.5em}
\printcontents[appendices]{l}{1}{\setcounter{tocdepth}{2}}
\clearpage

\section{Extended Related Work}

\paragraph{\emph{Reasoning collapse and policy degeneracy in closed-loop LM and agent RL training.}}
We study a family of degradation phenomena in closed-loop LLM-agent reinforcement learning that has not yet been uniformly defined, but has been repeatedly reported across settings \citep{guo_2025,wei2025gtrguidedthoughtreinforcement}. After the model is updated on self-sampled trajectories over time, it may gradually exhibit \emph{reasoning collapse} and \emph{policy-level degeneracy} \citep{guo_2025,wei2025gtrguidedthoughtreinforcement}. Here, \emph{reasoning collapse} mainly refers to the rationales, plans, or explanations becoming increasingly templated and less diverse, while their correspondence to the input goal weakens \citep{wei2025gtrguidedthoughtreinforcement,yao2025diversityawarepolicyoptimizationlarge,yun2025priceformatdiversitycollapse}. In contrast, \emph{policy-level degeneracy} refers to behavioral choices concentrating on a small set of easy-to-reproduce action patterns that yield stable scores, with less exploration and less error correction \citep{feng2025groupingrouppolicyoptimizationllm,wang2025practitionersguidemultiturnagentic}.

This family of phenomena echoes earlier findings in self-training, self-distillation, and iterative fine-tuning on synthetic or model-generated data. When a model repeatedly trains on its own generated distribution, the feedback loop can gradually narrow the effective data distribution, amplify a few high-probability modes, and suppress long-tail behaviors, even when average quality metrics appear stable \citep{gerstgrasser2024modelcollapseinevitablebreaking,Shumailov2024AIMC}. In the agent RL setting, closed-loop optimization on on-policy trajectories introduces additional risks, but these risks do not necessarily appear first as an overt failure of the behavioral policy. Instead, a commonly reported pattern is that, even when the agent's external behavior remains effective or yields stable rewards, language-level reasoning expressions can become concentrated earlier. Plans and explanations may converge to a few reusable narrative skeletons, and their alignment with the specific input goal can weaken \citep{wei2025gtrguidedthoughtreinforcement,xu2025epoentropyregularizedpolicyoptimization}. In other words, reasoning-level degeneration can decouple from policy-level degeneracy, and in some settings it may precede it \citep{wang2025practitionersguidemultiturnagentic}. In multi-turn interaction, related work also describes several visible signatures of this degradation family, such as within-task convergence across repeated rollouts, cross-task templating where different prompts share the same planning or rhetorical skeleton, and late-stage degeneration where later turns become more mechanical or more conservative \citep{wang2025harnessinguncertaintyentropymodulatedpolicy,xu2025epoentropyregularizedpolicyoptimization}.

\paragraph{\emph{Evaluating reasoning diversity, input dependence, and reasoning faithfulness.}}
Prior work on evaluating \emph{reasoning diversity} often answers how different the outputs are, but less directly answers whether these differences are \emph{systematically driven by the input goal}, which can blur the interpretation of template-like degeneration under closed-loop training \citep{tevet2021evaluatingevaluationdiversitynatural,yun2025priceformatdiversitycollapse}.
Concretely, common metrics range from lexical measures such as \textit{n}-gram statistics and self-BLEU \citep{li2016diversitypromotingobjectivefunctionneural,zhu2018texygenbenchmarkingplatformtext}, to embedding-based dispersion and distributional distances \citep{pillutla2021mauvemeasuringgapneural,tevet2021evaluatingevaluationdiversitynatural}, as well as token-level uncertainty proxies and multi-sample coverage or consistency analyses \citep{montahaei2019jointlymeasuringdiversityquality,semeniuta2019accurateevaluationganslanguage}.
These metrics primarily capture overall randomness or within-input variability, and they are often less sensitive to whether the reasoning distribution changes coherently \emph{across} inputs \citep{semeniuta2019accurateevaluationganslanguage,tevet2021evaluatingevaluationdiversitynatural}.
Other evaluation protocols rely on model scoring or human preference judgments to compare overall response quality, but they are not designed to isolate input-conditioned reasoning differences, and they may conflate prompt-coupled variation with prompt-agnostic surface diversity, especially when outputs converge to shared formats \citep{kirk2024understandingeffectsrlhfllm,yun2025priceformatdiversitycollapse}.
This leaves a gap for scalable evaluation of whether reasoning is \emph{diagnostic of the input}, which is particularly salient in multi-turn, stochastic environments where a fixed agent policy can produce diverse yet reusable templates \citep{wang2025practitionersguidemultiturnagentic}.
Recent work has started to probe input dependence via behavioral tests and local boundary checks \citep{gardner2020evaluatingmodelslocaldecision,ribeiro2020accuracybehavioraltestingnlp}, prompt robustness benchmarks \citep{zhu2024promptbenchunifiedlibraryevaluation}, and retrieval-style output--input matching or prompt reconstruction signals \citep{morris2023languagemodelinversion,gao2024dorydeliberativepromptrecovery,zhang2024extractingpromptsinvertingllm,li2025reversepromptengineering}.
However, a unified and scalable treatment tailored to closed-loop agent RL remains limited, even as algorithmic work continues to address long-horizon stability and collapse \citep{feng2025groupingrouppolicyoptimizationllm,yao2025diversityawarepolicyoptimizationlarge}.

A closely related line studies \emph{reasoning faithfulness} (explanation faithfulness), which asks whether a rationale reflects the true basis of a decision rather than a plausible post-hoc story \citep{lanham2023measuringfaithfulnesschainofthoughtreasoning,turpin2023languagemodelsdontsay,siegel2024probabilitiesmatterfaithfulmetric,zaman2025chainofthoughtreallyexplainabilitychainofthought}.
Our question is related but not equivalent: faithfulness emphasizes whether reasoning causally supports a particular decision, while we focus on a different degeneration risk in closed-loop optimization, namely whether reasoning gradually becomes \emph{less sensitive to the input} and drifts toward reusable templates, even when local explanations remain self-consistent \citep{kirk2024understandingeffectsrlhfllm}.
This motivates our decomposition of reasoning diversity into within-input variability and cross-input dependence, and our scalable proxy for the latter through an information-theoretic lens.

\paragraph{\emph{Stabilizing multi-turn Agent RL under closed-loop sampling.}}
To improve training stability when aligning LLMs and LLM-based agents, prior work has proposed a broad set of algorithmic and system-level techniques.
These include KL control or trust-region style constraints, entropy regularization, clipping and normalization in policy-gradient updates, reward shaping and credit assignment, curriculum design, replay or offline--online mixtures, as well as rejection sampling and best-of-\textit{N} selection \citep{schulman2017trustregionpolicyoptimization,schulman2017proximalpolicyoptimizationalgorithms,schulman2018highdimensionalcontinuouscontrolusing,haarnoja2019softactorcriticalgorithmsapplications,stiennon2022learningsummarizehumanfeedback,ouyang2022traininglanguagemodelsfollow,rafailov2024directpreferenceoptimizationlanguage,sun2024fastbestofndecodingspeculative,feng2025groupingrouppolicyoptimizationllm,wang2025practitionersguidemultiturnagentic,wang2025harnessinguncertaintyentropymodulatedpolicy,xu2025epoentropyregularizedpolicyoptimization,yao2025diversityawarepolicyoptimizationlarge}.
For multi-step agents, researchers have also explored stepwise rewards and intermediate supervision, imitation-to-RL pipelines, and self-correction or reflection signals to support longer-horizon planning and reduce brittle behaviors \citep{cobbe2021trainingverifierssolvemath,nakano2022webgptbrowserassistedquestionansweringhuman,uesato2022solvingmathwordproblems,madaan2023selfrefineiterativerefinementselffeedback,shinn2023reflexionlanguageagentsverbal,wang2023voyageropenendedembodiedagent,yao2023reactsynergizingreasoningacting,dou2025rerestreflectionreinforcedselftraininglanguage,wei2025gtrguidedthoughtreinforcement}.

Despite these advances, many stabilization methods are tuned to prevent optimization collapse or to improve overall reward.
When the effective learning signal in the closed loop becomes weak or noisy, these methods do not necessarily prevent drift toward prompt-agnostic templates.
For example, if most rollouts for the same prompt receive similar rewards regardless of reasoning quality, then the gradient update carries little information about which reasoning path matters \citep{moskovitz2023confrontingrewardmodeloveroptimization,o'mahony2024attributing,Shumailov2024AIMC,yun2025priceformatdiversitycollapse}.
This motivates methods that explicitly manage the balance between task-specific signal and task-agnostic pressure.
We adopt a signal-to-noise view of closed-loop updates: we use within-prompt reward variance as a proxy for signal strength, and we filter low-signal samples to maintain an effective SNR, so that exploration and input-conditioned reasoning are less likely to be washed out over long-horizon multi-turn optimization \citep{romoff2018rewardestimationvariancereduction,shao2024deepseekmathpushinglimitsmathematical,tao2025hybridreinforcementrewardsparse,feng2025groupingrouppolicyoptimizationllm,yao2025diversityawarepolicyoptimizationlarge}.

\section{Detailed Experimental Settings}

\subsection{Environments and Tasks}
\label{app:envs}
We construct a diverse seven-environment testbed to evaluate LLM agents across complementary axes of decision-making complexity, including planning under irreversible dynamics (Sokoban), long-horizon control with non-deterministic transitions (FrozenLake), multi-step symbolic reasoning in mathematics (MetaMathQA, Countdown), multi-turn search and information synthesis (SearchQA), goal-directed web navigation (WebShop), and program synthesis from input-output specifications (DeepCoder). All environments are synthetic and fully controllable, enabling clean analysis of RL learning from scratch without relying on real-world priors.

\textbf{Sokoban.} We use the puzzle Sokoban~\cite{schrader2018gymsokoban} to study multi-turn agent interaction with irreversible dynamics. The agent must push boxes to designated target locations within a grid-based warehouse. Unlike standard navigation tasks, Sokoban is characterized by irreversibility: boxes can only be pushed, not pulled, meaning a single misstep can create unsolvable dead-ends where boxes become permanently stuck against walls or corners. This requires the agent to reason ahead and plan multi-step sequences before committing to actions. The reward signal encourages both efficiency and accuracy: $+1$ for each box successfully placed on a target, $-1$ for moving a box off a target, $+10$ upon task completion, and $-0.1$ per action as a step penalty. We use procedurally generated puzzles with configurable room dimensions and box counts to ensure diverse training scenarios.

\textbf{Frozen Lake.} This environment of FrozenLake~\cite{brockman2016openai_gym} combines long-horizon decision-making with deterministic transitions. The agent navigates a grid of frozen tiles to reach a goal while avoiding holes that terminate the episode. We use the 2\% random rate variant of Frozen Lake, where each intended action is executed at a 98\% probability. Rewards are sparse: only successful goal-reaching trials receive a reward of $+1$, with all other outcomes yielding $0$. The combination of sparse rewards and long-horizon planning makes this environment challenging for credit assignment.

\textbf{MetaMathQA.} To evaluate mathematical reasoning capabilities, we include MetaMathQA~\cite{yu2023metamath}, a question-answering task drawn from the MetaMathQA dataset. Each episode presents the agent with a mathematical problem requiring multi-step reasoning---ranging from arithmetic and algebra to word problems and geometry. The agent must produce a final answer, and correctness is determined by exact match with the ground truth. To encourage efficient reasoning, we employ a diminishing reward scheme: correct answers on the first attempt receive full reward ($1.0$), with rewards halving for each subsequent attempt ($0.5$, $0.25$, \dots).

\textbf{Countdown.} Inspired by the numbers game from the TV show ``Countdown''~\cite{katz2025countdown}, this environment tests compositional arithmetic reasoning. The agent is given a target number and a set of source numbers, and must construct an arithmetic expression using each source number at most once to reach the target exactly. For example, given target $24$ and numbers $[1, 5, 6, 7]$, a valid solution is $6 \times (7 - 5 + 1) + 6$. Rewards distinguish between format correctness and solution correctness: full reward ($1.0$) for correct solutions, partial reward ($0.1$) for expressions that use the correct numbers but yield incorrect results, and zero for malformed expressions.

\textbf{DeepCoder.} To evaluate agent capabilities in coding environments, we use DeepCoder, a coding benchmark consisting of competitive programming problems. It was used to train DeepSeek-R1-Distill-Qwen-14B with reinforcement learning. The benchmark draws from three resources: PrimeIntellect \cite{mattern2025synthetic1}, TACO\cite{li2023taco}, and LiveCodeBench v5 (LCBv5) \cite{jain2024livecodebench}. In this environment, agents are required to generate a Python function that solves the given programming problem and passes all hidden and public test cases. During training, rewards are assigned based on the number of test cases successfully passed.

\textbf{SearchQA.} To evaluate multi-turn search and question-answering capabilities, we include SearchQA from the RLLM framework~\cite{rllm2025}, specifically the Search R1 variant. This environment requires the agent to perform iterative web search and reasoning to answer open-domain questions. The agent must formulate search queries, extract relevant information from retrieved documents, and synthesize answers across multiple interaction turns. Rewards are based on answer correctness and search efficiency, encouraging the agent to balance exploration breadth with reasoning depth.

\textbf{WebShop.} We use WebShop~\cite{yao2022webshop}, an interactive e-commerce environment for evaluating goal-directed multi-turn decision-making. The agent is presented with a shopping instruction (e.g., ``find a red shirt under \$30'') and must navigate a simulated online shopping website by issuing search queries, clicking on products, and selecting appropriate items. The environment features a large action space with realistic product catalogs and requires the agent to perform language understanding, attribute matching, and sequential decision-making. Rewards are assigned based on how well the purchased item matches the specified attributes and constraints.

\subsection{Training and Evaluation Setup}

We conduct our main experiments using Qwen2.5-3B and train with four policy-gradient variants---PPO, DAPO, GRPO, and Dr.GRPO---for up to 400 rollout--update iterations on NVIDIA GPUs using the veRL framework, with early stopping enabled as described below. Each iteration collects $K=128$ trajectories per environment, organized as $P=8$ prompt groups with $G=16$ parallel samples per prompt.

\textbf{Episode horizons.} To match task structure, the interactive environments (Sokoban, Frozen Lake) use up to 5 interaction turns with 2 actions per turn (10 total actions per trajectory). The single-step reasoning tasks (Countdown, MetaMathQA) use 1 turn with 1 action.

\textbf{Optimization.} We use an update batch size of 32 and a per-GPU minibatch size of 4. Policy optimization uses GAE with $(\gamma,\lambda)=(1.0,1.0)$ and Adam with $(\beta_1,\beta_2)=(0.9,0.999)$. The actor learning rate is $1\times 10^{-6}$ and the critic learning rate is $1\times 10^{-5}$. We apply entropy regularization with coefficient $\beta=0.001$. For PPO-based methods, we use asymmetric clipping with $\epsilon_{\text{low}}=0.2$ and $\epsilon_{\text{high}}=0.28$. We additionally impose a format penalty of $-0.1$ when the agent fails to output a valid structured response (e.g., missing \texttt{<think>} or \texttt{<answer>} tags).

\textbf{Early stopping.} We stop training if either (i) reward-variance collapse is detected---the reward variance drops below 10\% of the baseline variance (defined as the mean variance over the first 10 training iterations) for 5 consecutive iterations---or (ii) the validation success rate remains below 1\% for 5 consecutive evaluation checkpoints.

\textbf{Filtering ablation.} We compare filtered rollouts with $\texttt{top\_p}=0.9$ (keeping the top 90\% of trajectory groups ranked by reward variance) against an unfiltered setting.

\textbf{Evaluation.} We evaluate on a fixed set of 512 validation prompts per environment and decode with temperature $T=0.5$ using stochastic sampling. We report success rate as the primary metric across all environments.

\section{Filtering Ablation Results}

We conduct our filtering experiments using Qwen2.5-3B model on Sokoban environment. We summarize the filtering ablation results in Table ~\ref{tab:10}. Each row reports the absolute value of each metric, with the change relative to a section-specific baseline shown in parentheses. Within each block, the first row labeled \emph{baseline} defines the reference point, and all deltas are computed relative to that baseline. We report four metrics: \textbf{Task Performance}, defined as the maximum validation success rate attained during training; \textbf{MI Proxy}, measured as retrieval accuracy at the training step where task performance peaks; \textbf{Entropy}, an estimate of reasoning entropy at the same step; and \textbf{Collapse}, a binary indicator of whether validation success ever falls below $0.01$ during training.

\textbf{Sampling Settings.}
We first study the interaction between filtering and sampling by varying sampling thresholds while holding the reward-variance (RV) filter fixed. Relative to the $\texttt{top\_p}=1.0$ baseline, reducing $\texttt{top\_p}$ or $\texttt{min\_p}$ generally improves task performance while reducing entropy, but with heterogeneous effects on MI retention. In contrast, $\texttt{top\_k}$ sampling induces a sharper trade-off: MI proxy is often preserved or improved, while gains in task performance are less consistent. These results indicate that filtering behavior is strongly modulated by the sampling regime, even when the underlying filter metric is unchanged.

\textbf{Filtering Metrics.}
Next, we fix the sampling scheme and vary the filtering criterion. Switching between RV, entropy-based, entropy-variance, and length-based filters leads to substantial differences in both peak task performance and MI proxy. In particular, SNR-Aware Filtering consistently achieves strong task performance while better preserving MI compared to entropy-based alternatives. Entropy- and length-based filters either suppress MI or fail to prevent collapse, suggesting that reward variance provides a more stable and informative signal for selecting useful rollouts.

\textbf{Keep Strategy.}
Finally, we compare \emph{keep-largest} and \emph{keep-smallest} strategies under the same $\texttt{top\_k}$ configuration. As expected, retaining high-variance trajectory groups yields substantially higher task performance and MI proxy, while keeping the smallest-variance groups degrades both and markedly increases entropy. This asymmetry supports the hypothesis that high-variance rollouts contain more informative training signal, whereas low-variance rollouts are largely uninformative or noisy.

\textbf{Summary.}
Overall, the ablation reveals strong interactions between sampling strategy and filtering choice. More aggressive filtering is not universally beneficial, and the choice of filtering metric is critical: reward-variance filtering consistently improves task performance while maintaining information content, whereas entropy-based heuristics are less reliable and more prone to collapse.

\begin{table*}[t]
\caption{Ablation results for sampling strategies, filtering metrics, and keep strategies. Values in parentheses denote the change relative to the corresponding baseline in each block. A crossmark in the Stable column indicates training collapse.}
\label{tab:10}
\centering
\footnotesize
\setlength{\tabcolsep}{4.5pt}
\renewcommand{\arraystretch}{1.08}
\begin{tabular*}{\textwidth}{@{\extracolsep{\fill}}>{\raggedright\arraybackslash}p{3.7cm}*{3}{>{\raggedleft\arraybackslash}p{2.85cm}}>{\centering\arraybackslash}p{1.8cm}}
\toprule
\texttt{\textbf{EXPERIMENT SETUP}} & \texttt{\textbf{TASK PERF}} & \texttt{\textbf{MI PROXY}} & \texttt{\textbf{ENTROPY}} & \texttt{\textbf{STABLE}} \\
\midrule

\multicolumn{5}{>{\columncolor{blkA}}l}{\hspace{-\tabcolsep}\textbf{Sampling Strategies}} \\
Top-p = 1.0 (Baseline) & 0.17 & 0.54 & 2.76 & \xmark \\
Top-p = 0.9            & 0.38 \textbf{(+0.20)} & 0.84 \textbf{(+0.29)} & 1.64 (-1.12) & \cmark \\
Top-p = 0.5            & 0.29 \textbf{(+0.12)} & 0.83 \textbf{(+0.29)} & 1.88 (-0.88) & \cmark \\
Min-p = 0.05           & 0.42 \textbf{(+0.25)} & 0.67 \textbf{(+0.13)} & 1.64 (-1.12) & \cmark \\
Min-p = 0.2            & 0.45 \textbf{(+0.27)} & 0.36 (-0.18) & 3.01 \textbf{(+0.26)} & \cmark \\
Top-k = 0.25           & 0.22 \textbf{(+0.05)} & 0.86 \textbf{(+0.32)} & 1.28 (-1.48) & \cmark \\
Top-k = 0.5            & 0.44 \textbf{(+0.27)} & 0.89 \textbf{(+0.35)} & 1.47 (-1.29) & \cmark \\

\addlinespace[4pt]
\multicolumn{5}{>{\columncolor{blkB}}l}{\hspace{-\tabcolsep}\textbf{Filtering Metrics}} \\
No Filter (Baseline) & 0.17 & 0.54 & 2.76 & \xmark \\
Reward Variance      & 0.38 \textbf{(+0.20)} & 0.84 \textbf{(+0.29)} & 1.64 (-1.12) & \cmark \\
Reward Sum           & 0.24 \textbf{(+0.07)} & 0.80 \textbf{(+0.26)} & 4.18 \textbf{(+1.42)} & \xmark \\
Entropy              & 0.20 \textbf{(+0.02)} & 0.41 (-0.14) & 2.20 (-0.56) & \xmark \\
Entropy Variance     & 0.23 \textbf{(+0.06)} & 0.70 \textbf{(+0.16)} & 2.94 \textbf{(+0.18)} & \xmark \\
Length               & 0.16 (-0.02) & 0.91 \textbf{(+0.36)} & 1.65 (-1.10) & \xmark \\

\addlinespace[4pt]
\multicolumn{5}{>{\columncolor{blkC}}l}{\hspace{-\tabcolsep}\textbf{Keep Strategies}} \\
Keep Largest (Baseline) & 0.44 & 0.89 & 1.47 & \cmark \\
Keep Smallest           & 0.29 (-0.15) & 0.47 (-0.42) & 5.31 \textbf{(+3.84)} & \cmark \\

\bottomrule
\end{tabular*}
\end{table*}

\section{Additional Experimental Visualizations}
\label{app:additional-figs}

This section presents supplementary visualizations that provide deeper insights into the mechanisms and diagnostics discussed in the main paper. These figures complement the core experimental results with detailed breakdowns of gradient dynamics, diagnostic validity, and reward distribution patterns.

\subsection{MI Proxy Metrics During Training}
\label{app:mi-proxies}

Figure~\ref{fig:mi-proxies-appendix} presents six alternative mutual-information proxy metrics tracked over the course of training, complementing the retrieval accuracy shown in Figure~\ref{fig:F02} of the main paper. All proxies exhibit a consistent pattern: under the \emph{No Filtering} baseline, MI proxies degrade sharply as training progresses, while the three intervention strategies (entropy regularization, KL regularization, and top-$p$ filtering) maintain stable information retention throughout training.

\begin{figure}[htbp]
    \centering
    \includegraphics[width=\textwidth]{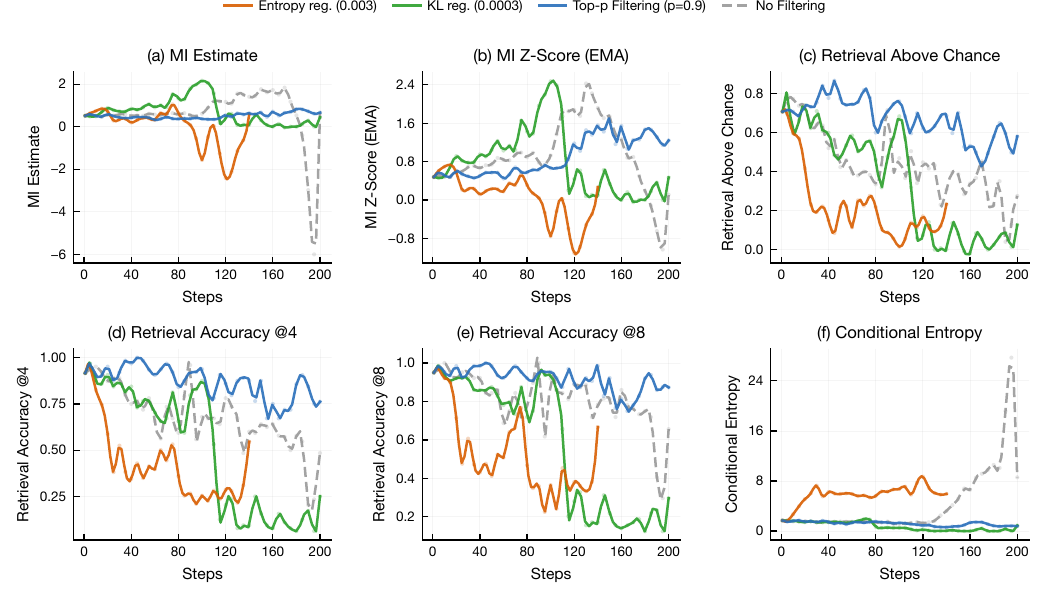}
    \caption{Six MI proxy metrics over training steps. (a)~MI Estimate, (b)~MI Z-Score (EMA), (c)~Retrieval Above Chance, (d)~Retrieval Accuracy @4, (e)~Retrieval Accuracy @8, (f)~Conditional Entropy $H(Z|X)$. Consistent with Figure~\ref{fig:F02}, all proxies confirm that without filtering, MI degrades early, signaling reasoning collapse. Filtering effectively preserves information retention across all metrics, with top-$p$ SNR-aware filtering best maintaining reasoning diversity throughout training.}
    \label{fig:mi-proxies-appendix}
\end{figure}


\section{Notation and basic identities}\label{app:notation}

\subsection{Random variables and distributions}\label{app:notation:rvs}
\begin{definition}[Prompts, trajectories, and rollouts]\label{def:rvs}
Let $X$ denote an input prompt and $Z$ a reasoning trajectory. A rollout sample is 
\[
\xi = (x, z, r),
\]
with $x$ the prompt, $z$ the realized trajectory, and $r\in\mathbb{R}$ the scalar reward.

We write $\pi_\theta(z \mid x)$ as the policy and $P(X)$ the prompt distribution. Rollouts are generated by
\[
x\sim P(X),\qquad z\sim \pi_\theta(\cdot\mid x),\qquad r=R(z;x),
\]
where $R(z; x)$ is the reward function.
\end{definition}

\begin{definition}[Baseline and advantage]\label{def:advantage}
Let $b(x)$ be any function of $x$ only. Define the advantage
\[
A(z;x) := R(z;x) - b(x).
\]
A standard choice is the conditional-mean baseline
\(
b(x) := \mathbb{E}[R(Z;x)\mid X=x].
\)
Then the advantage is zero-mean within each prompt:
\[
\mathbb{E}[A(Z;x)\mid X=x]
= \mathbb{E}[R(Z;x)\mid X=x] - b(x)
= 0.
\]
\end{definition}

\begin{definition}[Score function]\label{def:score}
Define the score function
\[
s(z;x) := \nabla_\theta \log \pi_\theta(z\mid x).
\]
It satisfies the normalization identity
\[
\mathbb{E}_{z\sim \pi_\theta(\cdot\mid x)}[\,s(z;x)\,]
=
\nabla_\theta \int \pi_\theta(z\mid x)\,dz
=
0.
\]
\end{definition}

\begin{definition}[Within-prompt reward variance]\label{def:rv}
We quantify within-prompt variation of observed rewards across rollouts by
\[
\mathrm{RV}(x) := \mathrm{Var}(R(Z;x) \mid X=x),
\qquad Z\sim \pi_\theta(\cdot\mid x).
\]
Low $\mathrm{RV}(x)$ implies rewards are nearly constant within the prompt, so rollouts are weakly distinguishable by the reward signal. High $\mathrm{RV}(x)$ indicates large within-prompt variation of observed rewards which may arise from trajectory-dependent signal or evaluation noise.
\end{definition}

\subsection{Entropy and mutual information}\label{app:notation:info}

\begin{definition}[Conditional entropy]\label{def:cond-entropy}
The within-input variability of reasoning is measured by
\[
H(Z\mid X)
:= \mathbb{E}_{x\sim P(X)}\!\big[H(Z\mid X=x)\big]
= -\mathbb{E}_{x\sim P(X),\, z\sim \pi_\theta(\cdot\mid x)}\!\big[\log \pi_\theta(z\mid x)\big].
\]
The cross-input dependence of reasoning is measured by
\[
I(X;Z)
:= \mathbb{E}_{x\sim P(X),\, z\sim \pi_\theta(\cdot\mid x)}
\!\left[\log \frac{\pi_\theta(z\mid x)}{p_\theta(z)}\right],
\qquad
p_\theta(z) := \mathbb{E}_{x\sim P(X)}\big[\pi_\theta(z\mid x)\big].
\]
Equivalently, $I(X;Z)=\mathbb{E}_{x\sim P(X)}\!\big[\mathrm{KL}(\pi_\theta(\cdot\mid x)\,\|\,p_\theta)\big]$.
\end{definition}

\paragraph{Decomposition identity (Shannon quantities).}
For the true distribution induced by $\pi_\theta$, the Shannon identity
\begin{equation}\label{eq:entropy-mi-decomp}
H(Z) \;=\; H(Z\mid X) \;+\; I(X; Z),
\end{equation}
serves only as conceptual equation: it specifies the two components we aim to track (within-prompt variability and cross-prompt dependence).
In practice we replace these Shannon quantities by scorer-defined proxies, e.g.,
\[
\widehat{\mathcal D}_q := \widehat{\mathrm{NLL}}_q(Z\mid X) + \widehat I_q(X;Z),
\]
which is in log-likelihood units under $q$ and does not in general satisfy the Shannon identity unless $q$ matches the evaluated distribution.

\paragraph{Interpretation for reasoning diversity.}
In our setting, $Z$ is a proxy for a reasoning process (e.g., a chain-of-thought trajectory).
A relative decrease in $H(Z\mid X)$ indicates within-prompt concentration of $\pi_\theta(\cdot\mid x)$ (entropy collapse).
A relative decrease in $I(X;Z)$ indicates weakened input dependence, i.e., trajectories become less diagnostic of $x$.
In our analysis, this can occur when reward-driven updates are weak (e.g., low $\mathrm{RV}(x)$) and the total update is dominated by \emph{reward-agnostic} components (e.g., KL/entropy regularizers).
We therefore track these two axes separately; in experiments we use scorer-defined proxies for $H(Z\mid X)$ and $I(X;Z)$.

\section{Scorer-based Proxies for Reasoning Diversity}\label{app:estimators}

\subsection{Setup and notation}\label{app:estimators:setup}
We define scorer-based proxies using a fixed collection of prompts and multiple rollouts per prompt. Throughout this appendix, the scorer $q$ is fixed and used for evaluation.

\begin{definition}[Prompt groups]\label{def:prompt-groups}
Using the notation from Definition~\ref{def:rvs}, sample $P$ prompts $\{x_i\}_{i=1}^P \sim P(X)$. For each prompt $x_i$, sample $G$ trajectories
\[
z_{i,k} \sim \pi_\theta(\cdot \mid x_i),
\qquad k=1,\dots,G.
\]
We refer to the set $\{z_{i,k}\}_{k=1}^G$ as a \emph{prompt group}.
\end{definition}

\begin{definition}[Teacher-forced scorer and matched-pair score]\label{def:matched-pair-score}
Let $q$ be a fixed language model used to score how compatible a trajectory $z$ is with a prompt $x$. Define the matched-pair score
\[
\ell_i(z) \;:=\; \log q(z \mid x_i).
\]
All proxies in this appendix are built from $\ell_i(z)$ and therefore are measured in log-likelihood units under $q$.
\end{definition}

\begin{definition}[Mixture score across prompts]\label{def:mixture-score}
We evaluate each trajectory $z$ under all prompts $\{x_j\}_{j=1}^P$ and define the mixture score
\[
\ell_{\mathrm{mix}}(z)
\;:=\;
\log\!\left(\frac{1}{P}\sum_{j=1}^P \exp(\ell_j(z))\right)
=
\log\!\left(\frac{1}{P}\sum_{j=1}^P q(z\mid x_j)\right).
\]
This is the log-likelihood of $z$ under the uniform mixture over prompts induced by $q$. Equivalently, $\ell_{\mathrm{mix}}(z)=\log\!\big(\frac{1}{P}\sum_{j=1}^P q(z\mid x_j)\big)$ is the log-probability of $z$ under the empirical prompt mixture.
\end{definition}

The quantities defined above depend on the sampled prompt set $\{x_i\}_{i=1}^P$ and on the fixed scorer $q$. They are proxies for within-prompt variability and input dependence of trajectories, and should not be interpreted as exact Shannon entropies or mutual information unless $q$ matches the evaluated conditional distribution.

\section{Formal Definition of the Filtering Operator}\label{app:filtering}

\begin{definition}[Filtering operator]\label{def:filtering}
Let $\mathcal{B}$ be a minibatch of samples. A \emph{filtering operator} is specified by:

\par\medskip\noindent\textbf{(i) Grouping key.}\quad
A grouping function $g:\mathcal{B}\to\mathcal{G}$ that assigns each sample $\xi\in\mathcal{B}$ a group label
\[
u = g(\xi).
\]
For $u\in\mathcal{G}$, define the induced group subset
\[
\mathcal{B}_u := \{\xi\in\mathcal{B}: g(\xi)=u\}.
\]

\par\medskip\noindent\textbf{(ii) Group statistic.}\quad
A statistic $\phi:\mathcal{2^{\mathcal{B}}}\to\mathbb{R}$ that depends only on the samples in the group, and we write  $\phi(\mathcal{B}_u)$ for the value computed from $\mathcal{B}_u$.

\par\medskip\noindent\textbf{(iii) Selection rule (mask).}\quad
Given a threshold $\tau\in\mathbb{R}$, the binary mask is
\[
m(u):=\mathbf{1}\{\phi(\mathcal{B}_u)\ge \tau\}.
\]

\par\medskip\noindent\textbf{(iv) Filtered objective.}\quad
For a per-sample RL loss $L_\theta(\xi)$, the filtered objective is
\[
\mathcal{L}_{\mathrm{filt}}(\theta)
\;=\;
\frac{1}{|\mathcal{B}|}\sum_{\xi \in\mathcal{B}} m\!\big(g(\xi)\big)\, L_\theta(\xi).
\]
\end{definition}

\paragraph{Remark (post-sampling).}
Filtering is applied after sampling and only masks gradients; it does not change the rollout distribution.

\paragraph{Remark (normalization).}
In practice one may normalize by the number of kept samples or kept groups (instead of $|\mathcal{B}|$), which rescales the gradient but does not change which samples contribute nonzero gradients.

\subsection{Filtering Strategy Variants}\label{app:filtering:variants}

We compare multiple filtering strategies for selecting high-signal prompt groups. All variants share the same grouping structure (prompts with $G$ rollouts each) and statistic (reward variance $\widehat{\mathrm{Var}}(R\mid X=x_i)$ for group $i$), but differ in the selection rule.

\paragraph{Top-p (nucleus-style) filtering.}
The main method used in this paper. Given keep rate $\rho\in(0,1]$, rank prompts by descending reward variance and select the smallest prefix whose cumulative variance mass reaches $\rho \sum_i \widehat{\mathrm{Var}}(R\mid X=x_i)$. Formally, let $\sigma$ be the permutation such that $\widehat{\mathrm{Var}}(R\mid X=x_{\sigma(1)}) \ge \cdots \ge \widehat{\mathrm{Var}}(R\mid X=x_{\sigma(N)})$, and define
\[
k^* = \min\left\{k : \sum_{j=1}^k \widehat{\mathrm{Var}}(R\mid X=x_{\sigma(j)}) \ge \rho \sum_{i=1}^N \widehat{\mathrm{Var}}(R\mid X=x_i)\right\}.
\]
The kept set is $S = \{\sigma(1),\dots,\sigma(k^*)\}$. This adaptive selection concentrates updates on high-variance prompts while automatically adjusting the kept count based on the variance distribution. When the batch contains many near-zero-variance prompts, top-p can reject the entire batch if the threshold cannot be reached, providing a natural safeguard against degenerate updates.

\paragraph{Top-k (proportional) filtering.}
An alternative fixed-proportion baseline. Given $\rho\in(0,1]$, compute $k = \lfloor \rho N \rfloor$ and select the top $k$ prompts by reward variance:
\[
S = \{\sigma(1),\dots,\sigma(k)\}.
\]
Unlike top-p, top-k always retains exactly $k$ groups regardless of the variance distribution. This can be less adaptive: when most prompts have near-zero variance, top-k still keeps the highest-variance subset even if all retained prompts carry weak signal.

\paragraph{Min-p (threshold) filtering.}
Inspired by min-p sampling, this strategy keeps all prompts whose variance exceeds a fraction of the maximum variance. Given threshold parameter $p\in(0,1]$, define
\[
\tau = p \cdot \max_i \widehat{\mathrm{Var}}(R\mid X=x_i),
\]
and keep all groups above the threshold:
\[
S = \left\{i : \widehat{\mathrm{Var}}(R\mid X=x_i) \ge \tau\right\}.
\]
This directly enforces a minimum quality bar: only prompts within a factor of $p$ of the best prompt are retained. The kept count varies with the variance distribution, making this method highly adaptive but potentially unstable when the maximum variance fluctuates.

\paragraph{Reverse top-p (low-variance) filtering.}
A diagnostic baseline that intentionally selects low-variance prompts. Rank prompts by \emph{ascending} reward variance and select the smallest prefix whose cumulative variance mass reaches $\rho \sum_i \widehat{\mathrm{Var}}(R\mid X=x_i)$. This inverted strategy is used in ablation studies to confirm that high variance is essential for effective updates: training on low-variance prompts should degrade both MI and task performance, validating the SNR hypothesis.

\paragraph{Implementation notes.}
All strategies can be configured to exclude zero-variance groups (setting \texttt{include\_zero=False}) before selection, which removes prompts where all rollouts received identical rewards. For top-p, we use a small epsilon $\varepsilon=0.01$ to ensure numerical stability when checking whether the cumulative threshold is reached. Additional implementation details and hyperparameter sensitivity are in the codebase.

\section{RV Controls Task-Signal Magnitude and SNR}\label{app:rv-snr}
\subsection{Setup}\label{app:rv-snr:setup}

We use the policy/score/baseline/advantage notation from Appendix~\ref{app:notation}.

In particular, for a fixed prompt $x$ we write $z\sim \pi_\theta(\cdot\mid x)$,
$s(z;x)=\nabla_\theta\log\pi_\theta(z\mid x)$,
$A(z;x)=R(z;x)-b(x)$ with $b(x)=\mathbb{E}[R\mid X=x]$,
and $\mathrm{RV}(x)=\mathrm{Var}(R\mid X=x)=\mathbb{E}[A^2\mid X=x]$.

\subsection{Assumption}\label{app:rv-snr:assumption}

\begin{assumption}[Reward decomposition]\label{asm:reward-decomp}
The observed reward admits a decomposition
\[
R(z;x)=\mu(x,z)+\varepsilon,
\qquad
\mu(x,z):=\mathbb{E}[R(z;x)\mid x,z],
\]
where $\mu(x,z)$ is the trajectory-dependent mean reward and $\varepsilon$ is a zero-mean noise term satisfying
\[
\mathbb{E}[\varepsilon\mid x,z]=0,
\qquad
\mathrm{Var}(\varepsilon\mid x,z)=\sigma^2(x)\ge 0.
\]
Moreover, the score $s(z;x)=\nabla_\theta \log \pi_\theta(z\mid x)$ is a deterministic (measurable) function of $(x,z)$.
\end{assumption}

\subsection{Task-gradient magnitude is RV-controlled}\label{app:rv-snr:grad}

The next result shows that the task-gradient norm for a given prompt is at most proportional to the square root of its within-prompt reward variance $\mathrm{RV}(x)$. In particular, when $\mathrm{RV}(x)$ is small, the task gradient is provably weak.

\begin{theorem}[Task gradient magnitude is RV-controlled]\label{thm:rv-grad}
Assume the baseline is the conditional mean $b(x)=\mathbb{E}[R\mid X=x]$, and $g_{\mathrm{task}}(x):=\mathbb{E}[A(z;x)\,s(z;x)\mid X=x]$. Then
\[
\|g_{\mathrm{task}}(x)\|
\;\le\;
\sqrt{\mathrm{RV}(x)}\,
\sqrt{\mathbb{E}[\|s(z;x)\|^2\mid X=x]}.
\]
\end{theorem}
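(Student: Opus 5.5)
The bound follows from a single application of the Cauchy--Schwarz inequality for vector-valued conditional expectations. The plan is to reduce the vector norm to a scalar inner product, apply Cauchy--Schwarz in $L^2(\pi_\theta(\cdot\mid x))$, and identify the advantage's second moment with $\mathrm{RV}(x)$ using Definition~\ref{def:advantage}.

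\textbf{Step 1 (reduce to scalars).} Fix $x$. If $g_{\mathrm{task}}(x)=0$ there is nothing to prove. Otherwise let $u := g_{\mathrm{task}}(x)/\|g_{\mathrm{task}}(x)\|$. Linearity of conditional expectation gives
\[
\|g_{\mathrm{task}}(x)\| = \langle u, g_{\mathrm{task}}(x)\rangle = \mathbb{E}\big[A(z;x)\,\langle u, s(z;x)\rangle \,\big|\, X=x\big].
\]
Alternatively, one can apply Jensen's inequality directly, $\|\mathbb{E}[A s]\|\le \mathbb{E}[|A|\,\|s\|]$, which avoids the unit vector altogether.

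\textbf{Step 2 (Cauchy--Schwarz).} Applying the scalar Cauchy--Schwarz inequality to the random variables $A$ and $\|s\|$ (or $\langle u,s\rangle$), and then using $|\langle u,s\rangle|\le\|s\|$, gives
\[
\|g_{\mathrm{task}}(x)\| \le \sqrt{\mathbb{E}[A^2\mid X=x]}\,\sqrt{\mathbb{E}[\|s\|^2\mid X=x]}.
\]

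\textbf{Step 3 (identify the variance).} Because $b(x)=\mathbb{E}[R\mid X=x]$, we have $\mathbb{E}[A^2\mid X=x]=\mathrm{Var}(R\mid X=x)=\mathrm{RV}(x)$, exactly as recorded in the setup of Appendix~\ref{app:rv-snr}. Substituting this into Step 2 yields the claim.

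There is no real obstacle beyond integrability. We should assume $\mathbb{E}[\|s\|^2\mid X=x]<\infty$; if it is infinite, the bound holds trivially. Assumption~\ref{asm:reward-decomp} is not needed here beyond measurability of $s$.

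One refinement is worth noting. By the score identity in Definition~\ref{def:score}, any baseline can be subtracted from $A$ without changing $g_{\mathrm{task}}$. It also follows that the RV factor, rather than the raw second moment of $R$, is the right quantity on the reward side, which is why the conditional-mean baseline gives the tightest version of the bound.
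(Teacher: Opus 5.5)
Your proposal is correct and follows the paper's proof essentially verbatim: the paper also pairs $g_{\mathrm{task}}(x)$ with a unit vector $u$, applies scalar Cauchy--Schwarz to $A$ and $\langle u,s\rangle$, uses $\langle u,s\rangle^2\le\|s\|^2$, and then identifies $\mathbb{E}[A^2\mid X=x]$ with $\mathrm{RV}(x)$ via the conditional-mean baseline. The only difference is that you fix $u=g_{\mathrm{task}}(x)/\|g_{\mathrm{task}}(x)\|$, whereas the paper takes a supremum over all unit vectors, which makes no difference to the argument.
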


\begin{proof}

Fix a prompt $x$ and take randomness over $z\sim\pi_\theta(\cdot\mid x)$.
For brevity write $A:=A(z;x)$ and $s:=s(z;x)$.
Then
\[
g_{\mathrm{task}}(x)=\mathbb{E}[A\,s\mid X=x].
\]
For any unit vector $u\in\mathbb{R}^d$ with $\|u\|=1$,
\[
\big|\langle u, g_{\mathrm{task}}(x)\rangle\big|
=
\big|\mathbb{E}[A\,\langle u,s\rangle\mid X=x]\big|
\le
\sqrt{\mathbb{E}[A^2\mid X=x]}\;
\sqrt{\mathbb{E}[\langle u,s\rangle^2\mid X=x]},
\]
where the inequality is Cauchy-Schwarz.
Moreover, $\langle u,s\rangle^2\le \|u\|^2\|s\|^2=\|s\|^2$, hence
\[
\big|\langle u, g_{\mathrm{task}}(x)\rangle\big|
\le
\sqrt{\mathbb{E}[A^2\mid X=x]}\;
\sqrt{\mathbb{E}[\|s\|^2\mid X=x]}.
\]
Taking the supremum over all unit vectors $u$ yields
\[
\|g_{\mathrm{task}}(x)\|
\le
\sqrt{\mathbb{E}[A^2\mid X=x]}\;
\sqrt{\mathbb{E}[\|s\|^2\mid X=x]}.
\]
Finally, with $b(x)=\mathbb{E}[R\mid X=x]$ we have $\mathbb{E}[A\mid X=x]=0$ and thus
\[
\mathbb{E}[A^2\mid X=x]
=
\mathrm{Var}(R\mid X=x)
=
\mathrm{RV}(x).
\]
Substituting completes the proof.
\end{proof}

\subsection{SNR is upper bounded by RV and reward noise}\label{app:rv-snr:snr}

The following theorem shows that the signal-to-noise ratio of the $G$-sample Monte Carlo gradient estimator is upper-bounded by $\sqrt{G}\cdot\sqrt{\mathrm{RV}(x)}/\sigma(x)$. When reward variance is low relative to reward noise, the estimator is dominated by noise.

\begin{theorem}[SNR upper bound by RV and noise]\label{thm:rv-snr}
Let $\widehat g_{\mathrm{task}}(x)$ be the $G$-sample Monte Carlo estimator
\[
\widehat g_{\mathrm{task}}(x):=\frac{1}{G}\sum_{k=1}^G A_k\,s_k,
\qquad
A_k:=A(z_k;x),\ s_k:=s(z_k;x),
\]
with $z_1,\dots,z_G\stackrel{\text{i.i.d.}}{\sim}\pi_\theta(\cdot\mid x)$.
Define
\[
\mathrm{SNR}(x):=
\frac{\|g_{\mathrm{task}}(x)\|}
{\sqrt{\mathbb{E}\big[\|\widehat g_{\mathrm{task}}(x)-g_{\mathrm{task}}(x)\|^2\mid X=x\big]}}.
\]
Under Assumption~\ref{asm:reward-decomp} and with baseline $b(x)=\mathbb{E}[R\mid X=x]$,
\[
\mathrm{SNR}(x)\;\le\;\sqrt{G}\cdot \frac{\sqrt{\mathrm{RV}(x)}}{\sigma(x)}.
\]
If $\sigma(x)=0$, the bound is vacuous.

\end{theorem}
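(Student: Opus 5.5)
The plan is to bound the numerator with Theorem~\ref{thm:rv-grad} and to lower-bound the Monte Carlo error in the denominator by the part of the estimator's variance that comes from the reward noise $\varepsilon$. The quantity $\mathbb{E}[\|s\|^2\mid X=x]$ should then cancel between the two bounds.

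Fix $x$ and abbreviate $\mathbb{E}[\cdot]:=\mathbb{E}[\cdot\mid X=x]$. Each sample is the pair $(z_k,\varepsilon_k)$, drawn i.i.d., with $R_k=\mu(x,z_k)+\varepsilon_k$.

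\emph{Step 1 (error of the estimator).} Each term $A_k s_k$ is an i.i.d. copy of $As$ with mean $g_{\mathrm{task}}(x)$. Writing $g:=g_{\mathrm{task}}(x)$, independence gives
\[
\mathbb{E}\|\widehat g_{\mathrm{task}}(x)-g\|^2=\tfrac{1}{G}\big(\mathbb{E}[A^2\|s\|^2]-\|g\|^2\big).
\]

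\emph{Step 2 (split $A$ into signal and noise).} Write $A=(\mu(x,z)-b(x))+\varepsilon$. By Assumption~\ref{asm:reward-decomp}, $s$ is a deterministic function of $(x,z)$. Conditioning on $z$ and using $\mathbb{E}[\varepsilon\mid x,z]=0$ kills the cross term $\mathbb{E}[2(\mu-b)\varepsilon\|s\|^2]$. Using $\mathbb{E}[\varepsilon^2\mid x,z]=\sigma^2(x)$ then gives
\[
\mathbb{E}[A^2\|s\|^2]=\mathbb{E}[(\mu-b)^2\|s\|^2]+\sigma^2(x)\,\mathbb{E}\|s\|^2 .
\]
The same conditioning gives $\mathbb{E}[\varepsilon s]=0$, so $g=\mathbb{E}[(\mu-b)s]$. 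Jensen's inequality for the convex map $v\mapsto\|v\|^2$ then gives $\|g\|^2\le\mathbb{E}[(\mu-b)^2\|s\|^2]$. Substituting into Step 1 yields
\[
\mathbb{E}\|\widehat g_{\mathrm{task}}(x)-g\|^2\ \ge\ \tfrac{1}{G}\,\sigma^2(x)\,\mathbb{E}\|s\|^2 .
\]

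\emph{Step 3 (combine).} Theorem~\ref{thm:rv-grad} gives $\|g\|\le\sqrt{\mathrm{RV}(x)}\sqrt{\mathbb{E}\|s\|^2}$. Dividing this by the square root of the Step 2 lower bound cancels $\sqrt{\mathbb{E}\|s\|^2}$ and leaves $\mathrm{SNR}(x)\le\sqrt{G}\sqrt{\mathrm{RV}(x)}/\sigma(x)$.

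Two degenerate cases need a remark. If $\mathbb{E}\|s\|^2=0$, then $s=0$ almost surely, so $g=0$ and the SNR is $0$ by convention (the ratio is $0/0$, and we set it to $0$ because there is no signal). If $\sigma(x)=0$, the right-hand side is $+\infty$ and the bound is vacuous, as the statement says.

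The main point needing care is Step 2. The cross term vanishes only because $s$ depends on $(x,z)$ alone and not on $\varepsilon$; this is exactly why the assumption requires $s$ to be a deterministic function of $(x,z)$. The Jensen step is what lets us discard the signal part of the variance and retain only the noise floor $\sigma^2(x)\,\mathbb{E}\|s\|^2/G$.
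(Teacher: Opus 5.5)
Your proof is correct and follows essentially the same route as the paper: both reduce the Monte Carlo error to $\frac{1}{G}\mathbb{E}\|As-g\|^2$, split $A$ into the signal part $\mu-b$ and the noise $\varepsilon$, kill the cross term by conditioning on $(x,z)$, discard the nonnegative signal-variance piece, and then combine with Theorem~\ref{thm:rv-grad}. The only difference is how that piece is discarded: you use Jensen, while the paper drops $\mathbb{E}\|A_\mu s-g\|^2\ge 0$ directly, which is the same fact; your remark on the degenerate case $\mathbb{E}\|s\|^2=0$ handles a point the paper leaves implicit.
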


\begin{proof}
Fix a prompt $x$. Let $z_1,\dots,z_G \stackrel{\mathrm{i.i.d.}}{\sim}\pi_\theta(\cdot\mid x)$ and write
\[
\widehat g \;=\; \frac{1}{G}\sum_{k=1}^G A_k s_k,
\qquad
g \;=\; \mathbb{E}[A s\mid x],
\]
where $(A_k,s_k)=(A(z_k;x),s(z_k;x))$ and $(A,s)=(A(z;x),s(z;x))$ for $z\sim\pi_\theta(\cdot\mid x)$.

Let $Y_k:=A_k s_k$. Then $\widehat g=\frac{1}{G}\sum_{k=1}^G Y_k$ and $g=\mathbb{E}[Y_1\mid x]$, hence
\[
\widehat g-g=\frac{1}{G}\sum_{k=1}^G (Y_k-g).
\]
Using i.i.d.\ conditional on $x$,
\begin{align*}
\mathbb{E}\!\left[\|\widehat g-g\|^2\mid x\right]
&=\frac{1}{G^2}\mathbb{E}\!\left[\left\|\sum_{k=1}^G (Y_k-g)\right\|^2\Bigm|x\right]\\
&=\frac{1}{G^2}\sum_{k=1}^G \mathbb{E}\!\left[\|Y_k-g\|^2\mid x\right]
+\frac{1}{G^2}\sum_{k\neq \ell}\mathbb{E}\!\left[\langle Y_k-g,\;Y_\ell-g\rangle\mid x\right]\\
&=\frac{1}{G^2}\sum_{k=1}^G \mathbb{E}\!\left[\|Y_k-g\|^2\mid x\right]\\
&=\frac{1}{G}\,\mathbb{E}\!\left[\|A s-g\|^2\mid x\right].
\end{align*}

Under Assumption~\ref{asm:reward-decomp} and with baseline $b(x)=\mathbb{E}[R\mid X=x]$, write $R=\mu+\varepsilon$ with $\mu(x,z)=\mathbb{E}[R\mid x,z]$. Since $b(x)=\mathbb{E}[R\mid x]=\mathbb{E}[\mu\mid x]$,
\[
A \;=\; R-b(x)\;=\;(\mu-\mathbb{E}[\mu\mid x])+\varepsilon \;=:\; A_\mu+\varepsilon.
\]

Using $A=A_\mu+\varepsilon$,
\[
A s-g=(A_\mu s-g)+\varepsilon s,
\]
so
\[
\|A s-g\|^2=\|A_\mu s-g\|^2+\|\varepsilon s\|^2+2\langle A_\mu s-g,\;\varepsilon s\rangle.
\]
Moreover,
\[
\mathbb{E}\!\left[\langle A_\mu s-g,\;\varepsilon s\rangle\mid x\right]
=
\mathbb{E}\!\left[\mathbb{E}\!\left[\langle A_\mu s-g,\;\varepsilon s\rangle\mid x,z\right]\Bigm|x\right]
=
\mathbb{E}\!\left[\langle A_\mu s-g,\;s\rangle\,\mathbb{E}[\varepsilon\mid x,z]\Bigm|x\right]
=0,
\]
hence
\[
\mathbb{E}\!\left[\|A s-g\|^2\mid x\right]\;\ge\;\mathbb{E}\!\left[\|\varepsilon s\|^2\mid x\right].
\]
Combining with the variance decomposition above,
\[
\mathbb{E}\!\left[\|\widehat g-g\|^2\mid x\right]
\;\ge\;
\frac{1}{G}\,\mathbb{E}\!\left[\|\varepsilon s\|^2\mid x\right].
\]

Since $\|\varepsilon s\|^2=\varepsilon^2\|s\|^2$ and $s$ is measurable given $(x,z)$,
\begin{align*}
\mathbb{E}\!\left[\|\varepsilon s\|^2\mid x\right]
&=\mathbb{E}\!\left[\mathbb{E}\!\left[\varepsilon^2\|s\|^2\mid x,z\right]\Bigm|x\right]\\
&=\mathbb{E}\!\left[\|s\|^2\,\mathbb{E}\!\left[\varepsilon^2\mid x,z\right]\Bigm|x\right]\\
&=\mathbb{E}\!\left[\|s\|^2\,\sigma^2(x)\Bigm|x\right]
=\sigma^2(x)\,\mathbb{E}\!\left[\|s\|^2\mid x\right],
\end{align*}
where $\mathbb{E}[\varepsilon^2\mid x,z]=\mathrm{Var}(\varepsilon\mid x,z)=\sigma^2(x)$ by Assumption~\ref{asm:reward-decomp}.
Therefore
\[
\mathbb{E}\!\left[\|\widehat g-g\|^2\mid x\right]
\;\ge\;
\frac{1}{G}\,\sigma^2(x)\,\mathbb{E}\!\left[\|s\|^2\mid x\right].
\]

By Theorem~\ref{thm:rv-grad},
\[
\|g\|
=\left\|\mathbb{E}[A s\mid x]\right\|
\le
\sqrt{\mathrm{RV}(x)}\;\sqrt{\mathbb{E}\!\left[\|s\|^2\mid x\right]}.
\]
Thus, with $\mathrm{SNR}(x):=\frac{\|g\|}{\sqrt{\mathbb{E}[\|\widehat g-g\|^2\mid x]}}$,
\[
\mathrm{SNR}(x)
\le
\frac{\sqrt{\mathrm{RV}(x)}\sqrt{\mathbb{E}[\|s\|^2\mid x]}}
{\sqrt{\frac{1}{G}\sigma^2(x)\mathbb{E}[\|s\|^2\mid x]}}
=
\sqrt{G}\cdot \frac{\sqrt{\mathrm{RV}(x)}}{\sigma(x)}.
\qedhere
\]
\end{proof}

\subsection{Low-SNR updates induce parameter drift}\label{app:rv-snr:drift}

When updates carry no directional signal (zero mean), the parameter drifts away from initialization at a rate linear in the number of steps. This illustrates why sustained low-SNR updates are harmful even if they do not systematically push in a wrong direction.

\begin{theorem}[Illustrative random-walk drift under zero-mean noise]\label{thm:drift}

Consider SGD-style updates
\[
\theta_{t+1}=\theta_t+\eta\,\xi_t,
\]
where $\{\xi_t\}_{t\ge 0}$ are independent, $\mathbb{E}[\xi_t]=0$, and $\mathbb{E}[\|\xi_t\|^2]=v<\infty$ for all $t$.
Then for any $T\ge 1$,
\[
\mathbb{E}\big[\|\theta_T-\theta_0\|^2\big]
=\eta^2\,T\,v.
\]
\end{theorem}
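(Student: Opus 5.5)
The plan is to telescope the recursion and expand the squared norm of a sum of independent zero-mean increments, in the same way the variance decomposition in the proof of Theorem~\ref{thm:rv-snr} was handled. Unrolling $\theta_{t+1}=\theta_t+\eta\,\xi_t$ from $t=0$ to $T-1$ gives
\[
\theta_T-\theta_0=\eta\sum_{t=0}^{T-1}\xi_t,
\]
so that
\[
\mathbb{E}\big[\|\theta_T-\theta_0\|^2\big]=\eta^2\,\mathbb{E}\Big[\Big\|\sum_{t=0}^{T-1}\xi_t\Big\|^2\Big].
\]

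Next I will expand the squared norm into diagonal and cross terms:
\[
\Big\|\sum_{t}\xi_t\Big\|^2=\sum_{t}\|\xi_t\|^2+\sum_{t\neq t'}\langle \xi_t,\xi_{t'}\rangle .
\]
The diagonal terms contribute $\sum_{t=0}^{T-1}\mathbb{E}\|\xi_t\|^2=Tv$ by hypothesis.

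For each cross term with $t\neq t'$, independence of $\xi_t$ and $\xi_{t'}$ gives, coordinatewise,
\[
\mathbb{E}\langle\xi_t,\xi_{t'}\rangle=\langle \mathbb{E}\xi_t,\mathbb{E}\xi_{t'}\rangle=0 .
\]
Justifying this factorization is the only mildly delicate step. Integrability holds because $\mathbb{E}\|\xi_t\|^2<\infty$. By Cauchy--Schwarz, $\mathbb{E}|\langle\xi_t,\xi_{t'}\rangle|\le \sqrt{v}\sqrt{v}<\infty$, so every expectation is finite and linearity of expectation applies to the finite sum. Combining the diagonal and cross contributions yields $\eta^2 T v$.

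I expect no real obstacle, since the argument is just the variance-of-a-sum identity. One remark should be recorded: in the SGD interpretation, $\xi_t$ may depend on $\theta_t$. In that case the same proof goes through if independence is weakened to a martingale-difference condition, $\mathbb{E}[\xi_t\mid \xi_0,\dots,\xi_{t-1}]=0$. The cross terms then vanish by the tower property, conditioning on the earlier increment's past, exactly as in the cross-term argument of Theorem~\ref{thm:rv-snr}.
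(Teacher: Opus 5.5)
Your proof is correct and matches the paper's argument: unroll to $\theta_T-\theta_0=\eta\sum_{t=0}^{T-1}\xi_t$, expand the squared norm, and kill the cross terms using independence and $\mathbb{E}[\xi_t]=0$; your Cauchy--Schwarz integrability check is a small bit of rigor the paper omits. One wording fix in your martingale-difference remark: for $t<t'$ you condition on $\sigma(\xi_0,\dots,\xi_{t'-1})$, the past of the \emph{later} increment (it contains $\xi_t$), which gives $\mathbb{E}\langle\xi_t,\xi_{t'}\rangle=\mathbb{E}\langle\xi_t,\mathbb{E}[\xi_{t'}\mid\xi_0,\dots,\xi_{t'-1}]\rangle=0$.
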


\begin{proof}
Unrolling the recursion yields
\[
\theta_T-\theta_0=\eta\sum_{t=0}^{T-1}\xi_t.
\]
Therefore,
\[
\|\theta_T-\theta_0\|^2
=\eta^2\left\|\sum_{t=0}^{T-1}\xi_t\right\|^2
=\eta^2\left(\sum_{t=0}^{T-1}\|\xi_t\|^2
+2\sum_{0\le i<j\le T-1}\langle \xi_i,\xi_j\rangle\right).
\]
Taking expectation and using independence with $\mathbb{E}[\xi_t]=0$,
\[
\mathbb{E}\langle \xi_i,\xi_j\rangle
=
\left\langle \mathbb{E}[\xi_i],\,\mathbb{E}[\xi_j]\right\rangle
=0,
\qquad i\neq j.
\]
Hence the cross terms vanish and
\[
\mathbb{E}\big[\|\theta_T-\theta_0\|^2\big]
=
\eta^2\sum_{t=0}^{T-1}\mathbb{E}[\|\xi_t\|^2]
=
\eta^2\,T\,v,
\]
where we used $\mathbb{E}[\|\xi_t\|^2]=v$ for all $t$.
\end{proof}

\section{Template Mixing Reduces Input Dependence}\label{app:template-mixing}

If the policy's conditional distribution is contaminated by a prompt-independent component $q(z)$ with mixing weight $\alpha$, the resulting mutual information $I_\alpha(X;Z)$ contracts by at least a factor of $(1-\alpha)$. This formalizes the intuition that even partial drift toward a shared template erodes input dependence.

\begin{lemma}[Template mixing contracts mutual information]\label{lem:template-mixing}
Let $X\sim P(X)$ and $Z\mid X=x\sim p(z\mid x)$ with marginal $p(z)=\mathbb{E}_{x\sim P}[p(z\mid x)]$.
Fix any prompt-independent distribution $q(z)$. For $\alpha\in[0,1]$, define the mixed conditional and marginal
\[
p_\alpha(z\mid x) := (1-\alpha)p(z\mid x)+\alpha q(z),
\qquad
p_\alpha(z) := (1-\alpha)p(z)+\alpha q(z).
\]
Let $I_\alpha(X;Z)$ denote the mutual information under $p_\alpha(x,z)=P(x)p_\alpha(z\mid x)$. Then
\[
I_\alpha(X;Z)\;\le\;(1-\alpha)\,I(X;Z).
\]
\end{lemma}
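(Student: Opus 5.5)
The plan is to use the representation $I(X;Z)=\mathbb{E}_{x\sim P}\big[\mathrm{KL}(p(\cdot\mid x)\,\|\,p)\big]$ from Definition~\ref{def:cond-entropy}, applied under the mixed joint law $p_\alpha(x,z)=P(x)p_\alpha(z\mid x)$. Under this law the $Z$-marginal is exactly $\mathbb{E}_{x}[p_\alpha(z\mid x)]=(1-\alpha)p(z)+\alpha q(z)=p_\alpha(z)$, so
\[
I_\alpha(X;Z)=\mathbb{E}_{x\sim P}\big[\mathrm{KL}(p_\alpha(\cdot\mid x)\,\|\,p_\alpha)\big].
\]
The point to exploit is that for each fixed $x$, the two arguments of this KL are the same convex combination of a pair of distributions:
\[
p_\alpha(\cdot\mid x)=(1-\alpha)p(\cdot\mid x)+\alpha q,\qquad p_\alpha=(1-\alpha)p+\alpha q.
\]

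The key step is the joint convexity of KL divergence in the pair $(P_1,P_2)$. Apply it to the pairs $(p(\cdot\mid x),p)$ and $(q,q)$ with weights $1-\alpha$ and $\alpha$:
\[
\mathrm{KL}(p_\alpha(\cdot\mid x)\,\|\,p_\alpha)\le(1-\alpha)\,\mathrm{KL}(p(\cdot\mid x)\,\|\,p)+\alpha\,\mathrm{KL}(q\,\|\,q).
\]
The last term is $0$. Taking $\mathbb{E}_{x\sim P}$ on both sides then gives $I_\alpha(X;Z)\le(1-\alpha)I(X;Z)$ directly.

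For self-containment I would justify joint convexity from the log-sum inequality. For nonnegative $a_1,a_2,b_1,b_2$,
\[
(a_1+a_2)\log\frac{a_1+a_2}{b_1+b_2}\le a_1\log\frac{a_1}{b_1}+a_2\log\frac{a_2}{b_2}.
\]
Set $a_1=(1-\alpha)p(z\mid x)$, $a_2=\alpha q(z)$, $b_1=(1-\alpha)p(z)$ and $b_2=\alpha q(z)$. The factors $(1-\alpha)$ and $\alpha$ cancel inside the logarithms, and integrating over $z$ yields the displayed KL inequality pointwise in $x$.

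The main technical care is measure-theoretic. Supports must be handled with the conventions $0\log 0=0$ and $a\log(a/0)=+\infty$. The case $I(X;Z)=\infty$ is trivial, and the endpoints $\alpha\in\{0,1\}$ are immediate. Absolute continuity of $p(\cdot\mid x)$ with respect to $p$ holds for $P$-almost every $x$, since $p$ is a mixture containing $p(\cdot\mid x)$. In the continuous case I would replace sums by integrals against a common dominating measure, for instance $p+q$.
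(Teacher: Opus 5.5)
Your proposal is correct and is essentially the paper's proof: the paper likewise writes $I_\alpha(X;Z)=\mathbb{E}_{x}\big[\mathrm{KL}(p_\alpha(\cdot\mid x)\,\|\,p_\alpha)\big]$ and invokes joint convexity of KL (citing Cover and Thomas) with the pairs $(p(\cdot\mid x),p)$ and $(q,q)$, so the $\alpha\,\mathrm{KL}(q\,\|\,q)$ term vanishes before averaging over $x$. Your explicit check that $p_\alpha$ is the $Z$-marginal of the mixed joint and your log-sum derivation of joint convexity are sound additions; the absolute-continuity remark is unnecessary and is not true for arbitrary continuous $X$ (e.g.\ $Z=X$), but this costs nothing, since whenever $\mathrm{KL}(p(\cdot\mid x)\,\|\,p)=\infty$ the pointwise inequality holds trivially.
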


\begin{proof}

For any fixed $x$,
\begin{align*}
\mathrm{KL}\!\big(p_\alpha(\cdot\mid x)\,\|\,p_\alpha(\cdot)\big)
&=
\mathbb{E}_{z\sim p_\alpha(\cdot\mid x)}
\left[\log\frac{p_\alpha(z\mid x)}{p_\alpha(z)}\right]\\
&=
\mathbb{E}_{z\sim p_\alpha(\cdot\mid x)}
\big[\log p_\alpha(z\mid x)-\log p_\alpha(z)\big].
\end{align*}
Taking expectation over $x\sim P(x)$ gives
\[
I_\alpha(X;Z)
=
\mathbb{E}_{x}\Big[\mathrm{KL}\!\big(p_\alpha(\cdot\mid x)\,\|\,p_\alpha(\cdot)\big)\Big].
\]
The same identity holds for $I(X;Z)$ with $p_\alpha$ replaced by $p$.

By joint convexity of $\mathrm{KL}(\cdot\|\cdot)$ \citep[Theorem~2.7.2]{coverthomas2006elements}, for any distributions $a,b,c,d$ and any $\alpha\in[0,1]$,
\[
\mathrm{KL}\!\big((1-\alpha)a+\alpha b \,\|\, (1-\alpha)c+\alpha d\big)
\le
(1-\alpha)\mathrm{KL}(a\|c)+\alpha\,\mathrm{KL}(b\|d).
\]
Let $a=p(\cdot\mid x)$, $b=q$, $c=p(\cdot)$, and $d=q$. Since
\[
p_\alpha(\cdot\mid x)=(1-\alpha)p(\cdot\mid x)+\alpha q,
\qquad
p_\alpha(\cdot)=(1-\alpha)p(\cdot)+\alpha q,
\]
we obtain
\begin{align*}
\mathrm{KL}\!\big(p_\alpha(\cdot\mid x)\,\|\,p_\alpha(\cdot)\big)
&\le
(1-\alpha)\mathrm{KL}\!\big(p(\cdot\mid x)\,\|\,p(\cdot)\big)
+\alpha\,\mathrm{KL}(q\|q)\\
&=
(1-\alpha)\mathrm{KL}\!\big(p(\cdot\mid x)\,\|\,p(\cdot)\big).
\end{align*}
Averaging over $x\sim P(x)$ yields
\[
\mathbb{E}_{x}\Big[\mathrm{KL}\!\big(p_\alpha(\cdot\mid x)\,\|\,p_\alpha(\cdot)\big)\Big]
\le
(1-\alpha)\,\mathbb{E}_{x}\Big[\mathrm{KL}\!\big(p(\cdot\mid x)\,\|\,p(\cdot)\big)\Big].
\]
Using the identity $I(X;Z)=\mathbb{E}_{x}\big[\mathrm{KL}(p(\cdot\mid x)\,\|\,p(\cdot))\big]$ (and the analogous one for $I_\alpha$), we obtain
\[
I_\alpha(X;Z)\le (1-\alpha)\,I(X;Z),
\]
which proves the lemma.
\end{proof}

\begin{remark}
The continuity bound $f(\varepsilon)$ depends on $\log(|\mathcal{X}||\mathcal{Z}|)$, which can be extremely large for LLM token spaces. Therefore, this result should be understood as a qualitative guarantee that KL-closeness implies MI-closeness in principle, rather than a tight quantitative bound in practice.
\end{remark}

\section{Filtering Reduces Gradient-Estimation MSE}\label{app:mse}

\subsection{Setup}\label{app:mse:setup}
Consider $P$ groups indexed by $i\in\{1,\dots,P\}$. Group $i$ contains $G$ rollouts, and $\widehat g_i\in\mathbb{R}^d$ denotes the \emph{group-level} gradient estimator (already averaged over the $G$ rollouts in the group).
We model
\[
\widehat g_i \;=\; g_i + \varepsilon_i,
\qquad
\mathbb{E}[\varepsilon_i]=0,
\qquad
\mathbb{E}\|\varepsilon_i\|^2=\sigma_i^2,
\]
where $\{\varepsilon_i\}_{i=1}^P$ are independent across groups.
For a kept set $S$ of groups, we write $n:=|S|$ for the number of kept groups.

\subsection{Unfiltered vs.\ filtered estimators}\label{app:mse:estimators}
Define the unfiltered batch estimator and its mean:
\[
\widehat{\bar{g}} := \frac{1}{P}\sum_{i=1}^P \widehat g_i,
\qquad
\bar{g} := \frac{1}{P}\sum_{i=1}^P g_i.
\]
Let $S\subseteq\{1,\dots,P\}$ be the set of kept groups with $|S|=n$. Define the filtered estimator and its mean:
\[
\widehat{\bar{g}}_S := \frac{1}{n}\sum_{i\in S} \widehat g_i,
\qquad
\bar{g}_S := \frac{1}{n}\sum_{i\in S} g_i.
\]

By retaining only a subset of prompt groups, the filtered estimator's mean-squared error depends solely on the noise variances of the kept groups. Dropping high-noise (low-RV) groups directly lowers the estimation error.

\begin{theorem}[MSE of the filtered estimator]\label{thm:mse-filtered}
$\widehat{\bar{g}}_S$ is unbiased for $\bar{g}_S$ and satisfies
\[
\mathbb{E}\big\|\widehat{\bar{g}}_S - \bar{g}_S\big\|^2
\;=\;
\frac{1}{n^2}\sum_{i\in S}\sigma_i^2.
\]
\end{theorem}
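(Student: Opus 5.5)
The plan is a direct second-moment computation that mirrors the variance step in the proof of Theorem~\ref{thm:rv-snr}. The one point needing care is that the kept set $S$ must be treated as fixed, i.e.\ not dependent on the noise realizations $\{\varepsilon_i\}$. In the practical top-$p$ rule, $S$ is chosen from the same rollouts, so I will state explicitly that the theorem conditions on $S$, as in the setup where $S$ is given. Equivalently, $S$ may be selected by an independent statistic.

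\textbf{Unbiasedness.} Write
\[
\widehat{\bar g}_S-\bar g_S=\frac{1}{n}\sum_{i\in S}(\widehat g_i-g_i)=\frac{1}{n}\sum_{i\in S}\varepsilon_i .
\]
Linearity of expectation and $\mathbb{E}[\varepsilon_i]=0$ give $\mathbb{E}[\widehat{\bar g}_S]=\bar g_S$.

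\textbf{MSE.} Expand the squared norm of the same sum:
\[
\Big\|\frac1n\sum_{i\in S}\varepsilon_i\Big\|^2=\frac{1}{n^2}\Big(\sum_{i\in S}\|\varepsilon_i\|^2+\sum_{i\neq j\in S}\langle\varepsilon_i,\varepsilon_j\rangle\Big).
\]
For the cross terms, independence across groups together with zero means gives $\mathbb{E}\langle\varepsilon_i,\varepsilon_j\rangle=\langle\mathbb{E}\varepsilon_i,\mathbb{E}\varepsilon_j\rangle=0$; this is the same argument as in Theorem~\ref{thm:drift}. Only the diagonal survives, and using $\mathbb{E}\|\varepsilon_i\|^2=\sigma_i^2$ yields $\frac{1}{n^2}\sum_{i\in S}\sigma_i^2$.

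\textbf{Main obstacle.} The algebra is routine; the real difficulty is the data-dependence of $S$. If $S$ depends on the $\varepsilon_i$, the selected noises need not have mean zero and the cross terms need not vanish. So I would state the result conditionally on a fixed $S$, noting that this is exactly how the setup is phrased.
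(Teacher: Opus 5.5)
Your proposal is correct and follows essentially the same route as the paper. You write $\widehat{\bar g}_S-\bar g_S=\frac{1}{n}\sum_{i\in S}\varepsilon_i$, get unbiasedness by linearity, and kill the cross terms $\mathbb{E}\langle\varepsilon_i,\varepsilon_j\rangle$ using independence and zero means. Your explicit caveat that $S$ must be fixed, or chosen independently of the $\varepsilon_i$, is a worthwhile addition: the paper's proof assumes this implicitly, even though the top-$p$ rule in practice selects $S$ from the same rollouts.
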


\begin{proof}
By the setup, $\widehat g_i=g_i+\varepsilon_i$ with $\mathbb{E}[\varepsilon_i]=0$, hence
\[
\mathbb{E}[\widehat g_i]=g_i.
\]
Therefore,
\[
\mathbb{E}[\widehat{\bar{g}}_S]
=\frac{1}{n}\sum_{i\in S}\mathbb{E}[\widehat g_i]
= \frac{1}{n}\sum_{i\in S} g_i
=\bar{g}_S.
\]
Moreover,
\[
\widehat{\bar{g}}_S-\bar{g}_S=\frac{1}{n}\sum_{i\in S}(\widehat g_i-g_i)
=\frac{1}{n}\sum_{i\in S}\varepsilon_i.
\]
Therefore,
\begin{align*}
\mathbb{E}\big\|\widehat{\bar{g}}_S-\bar{g}_S\big\|^2
&=\frac{1}{n^2}\,
\mathbb{E}\left\|\sum_{i\in S}\varepsilon_i\right\|^2 \\
&=\frac{1}{n^2}\left(
\sum_{i\in S}\mathbb{E}\|\varepsilon_i\|^2
+\sum_{\substack{i,j\in S\\ i\neq j}}\mathbb{E}\langle \varepsilon_i,\varepsilon_j\rangle
\right).
\end{align*}
By independence and $\mathbb{E}[\varepsilon_i]=0$, for $i\neq j$ we have
\[
\mathbb{E}\langle \varepsilon_i,\varepsilon_j\rangle
=
\left\langle \mathbb{E}[\varepsilon_i],\,\mathbb{E}[\varepsilon_j]\right\rangle
=0,
\]
so the cross terms vanish. Hence
\[
\mathbb{E}\big\|\widehat{\bar{g}}_S-\bar{g}_S\big\|^2
=\frac{1}{n^2}\sum_{i\in S}\mathbb{E}\|\varepsilon_i\|^2
=\frac{1}{n^2}\sum_{i\in S}\sigma_i^2.
\]
\end{proof}

\paragraph{Remark (bias relative to the original objective).}
While $\widehat{\bar{g}}_S$ is unbiased for the \emph{filtered} mean gradient $\bar{g}_S$, it is generally biased for the \emph{unfiltered} mean gradient $\bar{g}$ unless $S$ is chosen independently of $\{g_i\}$ or $g_i$ is constant across groups.

\section{Reward-Agnostic Regularizers and Update Dominance}\label{app:reg}

\subsection{Setup}\label{app:reg:setup}
Similarly, fix a prompt $x$ and consider trajectories $z\sim \pi_\theta(\cdot\mid x)$ with reward $R(z;x)$ and baseline $b(x)$.
Define the reward-driven (task) gradient
\[
g_{\mathrm{task}}(x)
\;:=\;
\mathbb{E}\!\left[(R(z;x)-b(x))\,s(z;x)\mid X=x\right],
\qquad
s(z;x):=\nabla_\theta\log\pi_\theta(z\mid x).
\]
Let $g_{\mathrm{reg}}(x)$ denote an update component that is computed without multiplying the reward (or advantage), e.g.,
\[
g_{\mathrm{reg}}(x):=\lambda_{\mathrm{KL}}\,g_{\mathrm{KL}}(x)+\lambda_{\mathrm{ent}}\,g_{\mathrm{ent}}(x),
\]
where $g_{\mathrm{KL}}(x)$ and $g_{\mathrm{ent}}(x)$ are gradients of prompt-level distributional regularizers.
We write the total expected update as
\[
g_{\mathrm{total}}(x)=g_{\mathrm{task}}(x)+g_{\mathrm{reg}}(x).
\]
To summarize relative influence, define the dominance ratio
\[
\rho(x):=\frac{\|g_{\mathrm{reg}}(x)\|}{\|g_{\mathrm{task}}(x)\|+\|g_{\mathrm{reg}}(x)\|}\in[0,1].
\]
We refer to $g_{\mathrm{reg}}(x)$ as \emph{reward-agnostic} since it does not use within-prompt reward differences to weight trajectories.

\subsection{Low-RV prompts amplify regularizer influence}\label{app:reg:rv}

When reward variance is small, the task gradient weakens (by Theorem~\ref{thm:rv-grad}) while regularizer gradients remain largely flat across prompts. Consequently, the regularizer's share of the total update grows on low-RV prompts, formalizing why these prompts are more prone to input-agnostic drift.

By Theorem~\ref{thm:rv-grad}, for any prompt $x$,
\[
\|g_{\mathrm{task}}(x)\|
\le
\sqrt{\mathrm{RV}(x)}\;\sqrt{\mathbb{E}[\|s\|^2\mid X=x]}.
\]
Therefore the dominance ratio
\[
\rho(x)=\frac{\|g_{\mathrm{reg}}(x)\|}{\|g_{\mathrm{task}}(x)\|+\|g_{\mathrm{reg}}(x)\|}
\]
admits the lower bound
\[
\rho(x)
\;\ge\;
\frac{\|g_{\mathrm{reg}}(x)\|}{\|g_{\mathrm{reg}}(x)\|+\sqrt{\mathrm{RV}(x)}\sqrt{\mathbb{E}[\|s\|^2\mid X=x]}}.
\]
In particular, if $\|g_{\mathrm{reg}}(x)\|$ and $\mathbb{E}[\|s\|^2\mid X=x]$ vary slowly across prompts compared to $\mathrm{RV}(x)$, then smaller $\mathrm{RV}(x)$ implies larger $\rho(x)$, i.e., the total update is more strongly shaped by reward-agnostic regularizers on low-$\mathrm{RV}$ prompts.

\section{KL-Closeness to the Base Implies MI-Closeness}\label{app:kl-mi}

If the current policy stays uniformly close to a reference policy in KL divergence, then the mutual information $I(X;Z)$ between inputs and reasoning also remains close. This means strong KL constraints preserve---but do not necessarily increase---input dependence.

\begin{theorem}\label{thm:kl-mi}
To avoid measure-theoretic issues, assume $X$ is supported on a finite set $\mathcal{X}$ and $Z$ takes values in a finite set $\mathcal{Z}$.
Let $P(X)$ be the prompt distribution and define
\[
P_\theta(x,z):=P(x)\pi_\theta(z\mid x),
\qquad
P_0(x,z):=P(x)\pi_0(z\mid x).
\]
If
\[
\sup_{x\in\mathcal{X}}
\mathrm{KL}\!\left(\pi_\theta(\cdot\mid x)\,\|\,\pi_0(\cdot\mid x)\right)
\le \varepsilon,
\]
then there exists $f(\varepsilon)\to 0$ as $\varepsilon\to 0$ such that
\[
\big|I_\theta(X;Z)-I_0(X;Z)\big|\le f(\varepsilon).
\]
\end{theorem}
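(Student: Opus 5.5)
The plan is to reduce KL-closeness to total-variation closeness and then use the continuity of entropy on finite alphabets. First, Pinsker's inequality turns the hypothesis into a uniform bound
\[
\delta:=\sup_{x\in\mathcal{X}}\big\|\pi_\theta(\cdot\mid x)-\pi_0(\cdot\mid x)\big\|_{\mathrm{TV}}\le\sqrt{\varepsilon/2}.
\]
Averaging over $P(X)$ then gives $\|P_\theta-P_0\|_{\mathrm{TV}}\le\delta$ on $\mathcal{X}\times\mathcal{Z}$. The same bound holds for the $Z$-marginals $p_\theta(z)=\sum_x P(x)\pi_\theta(z\mid x)$ and $p_0(z)$, because marginalization cannot increase total variation. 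The $X$-marginal $P(X)$ is identical under both joints.

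Next, write $I(X;Z)=H(Z)-H(Z\mid X)$, using the Shannon identity \eqref{eq:entropy-mi-decomp}, which is legitimate here since we work with the true distributions on finite sets. Then bound the two terms separately:
\[
|I_\theta-I_0|\le |H_\theta(Z)-H_0(Z)|+\sum_x P(x)\,\big|H(\pi_\theta(\cdot\mid x))-H(\pi_0(\cdot\mid x))\big|.
\]
For each term, apply the Fannes--Audenaert continuity bound for Shannon entropy on an alphabet of size $|\mathcal{Z}|$. If two distributions have total variation at most $t\le 1/2$, then
\[
|H(p)-H(q)|\le t\log(|\mathcal{Z}|-1)+h_b(t),
\]
where $h_b$ is the binary entropy. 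Applying this once to the marginals and once to each conditional, each with $t=\delta$, yields
\[
f(\varepsilon)=2\Big(\delta\log|\mathcal{Z}|+h_b(\delta)\Big),\qquad \delta=\sqrt{\varepsilon/2}.
\]
Since $h_b(\delta)\to0$ as $\delta\to0$, this gives $f(\varepsilon)\to 0$. The supremum over $x$ in the hypothesis is exactly what makes the conditional term uniform, so no dependence on $P(X)$ enters.

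The main obstacle is invoking the entropy continuity bound correctly. Its clean form requires $\delta\le 1/2$, so I would state the theorem for $\varepsilon\le 1/2$; for larger $\varepsilon$ the trivial bound $|I_\theta-I_0|\le\log|\mathcal{Z}|$ already suffices. If one prefers to avoid citing Audenaert, the alternative is to prove the weaker but sufficient bound
\[
|H(p)-H(q)|\le -t\log\frac{t}{|\mathcal{Z}|}
\]
directly from uniform continuity of $-u\log u$ on $[0,1]$. Either route depends on $\log|\mathcal{Z}|$, which is consistent with the Remark placed after Lemma~\ref{lem:template-mixing}: the guarantee is qualitative for LLM-scale vocabularies.
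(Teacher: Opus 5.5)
Your proof is correct, but it decomposes $I$ differently from the paper, and your route gives a better bound.

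\textbf{The paper's route.} It lifts the hypothesis to the joint distribution via the chain rule, $\mathrm{KL}(P_\theta\|P_0)=\mathbb{E}_x[\mathrm{KL}(\pi_\theta(\cdot\mid x)\|\pi_0(\cdot\mid x))]\le\varepsilon$. It applies Pinsker once to the joint and writes $I=H(X)+H(Z)-H(X,Z)$. It then applies Fannes--Audenaert on $\mathcal{X}\times\mathcal{Z}$ and on $\mathcal{Z}$, ending with $f(\varepsilon)=2\big(\delta\log(|\mathcal{X}||\mathcal{Z}|-1)+h_2(\delta)\big)$.

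\textbf{Your route.} By writing $I=H(Z)-H(Z\mid X)$ and applying the continuity bound prompt by prompt, you only ever work on the alphabet $\mathcal{Z}$. The $|\mathcal{X}|$-dependence that the paper's Remark complains about therefore disappears. Your bound improves on what that Remark describes rather than merely being consistent with it.

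\textbf{Getting the weaker hypothesis too.} In exchange, the paper's route needs a weaker hypothesis: it only uses the averaged KL, whereas you use the supremum to control each conditional term. You can have both. The sharp Audenaert inequality $|H(p)-H(q)|\le T\log(|\mathcal{Z}|-1)+h_2(T)$ holds at the exact distance for every $T\in[0,1]$. Apply it with $T_x:=\|\pi_\theta(\cdot\mid x)-\pi_0(\cdot\mid x)\|_{\mathrm{TV}}$ and use concavity of $h_2$. This gives $\sum_x P(x)\,|H(\pi_\theta(\cdot\mid x))-H(\pi_0(\cdot\mid x))|\le \bar T\log(|\mathcal{Z}|-1)+h_2(\bar T)$, where $\bar T=\sum_x P(x)T_x=\|P_\theta-P_0\|_{\mathrm{TV}}\le\sqrt{\varepsilon/2}$ by the paper's joint Pinsker step. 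So your decomposition also works under the averaged hypothesis.

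\textbf{Where a restriction on $\delta$ is needed.} The argument above shows it is not needed in Audenaert's inequality itself. It is needed only when you replace the true distance by its upper bound $\delta$, which uses that $t\mapsto t\log(|\mathcal{Z}|-1)+h_2(t)$ is increasing on $[0,1-1/|\mathcal{Z}|]$. Your explicit handling of large $\varepsilon$ via $|I_\theta-I_0|\le\log|\mathcal{Z}|$ is more careful than the paper, whose $f$ is not even defined once $\delta>1$.

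\textbf{A slip in your fallback.} The elementary bound $-t\log(t/|\mathcal{Z}|)$ (Cover--Thomas) holds with $t$ the $\ell_1$ distance, assumed at most $1/2$, i.e.\ twice the total variation. With $t$ the total variation it is false: for example, $p=(1,0)$ and $q=(1-t,t)$ give $h_2(t)>-t\log(t/2)$ for small $t$.
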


\begin{proof}

By the chain rule for KL divergence,
\[
\mathrm{KL}\!\left(P_\theta(X,Z)\,\|\,P_0(X,Z)\right)
=
\mathbb{E}_{x\sim P}\!\left[
\mathrm{KL}\!\left(\pi_\theta(\cdot\mid x)\,\|\,\pi_0(\cdot\mid x)\right)
\right].
\]
Under the assumption $\sup_{x\in\mathcal X}\mathrm{KL}\!\left(\pi_\theta(\cdot\mid x)\,\|\,\pi_0(\cdot\mid x)\right)\le \varepsilon$, we obtain
\[
\mathrm{KL}\!\left(P_\theta(X,Z)\,\|\,P_0(X,Z)\right)\le \varepsilon.
\]

By Pinsker's inequality,
\[
\|P_\theta(X,Z)-P_0(X,Z)\|_{\mathrm{TV}}
\le
\sqrt{\tfrac12\,\mathrm{KL}\!\left(P_\theta(X,Z)\,\|\,P_0(X,Z)\right)}
\le
\sqrt{\tfrac{\varepsilon}{2}}
=: \delta.
\]

Since $\|P_\theta(X,Z)-P_0(X,Z)\|_{\mathrm{TV}}\le \delta$ and $(X,Z)$ takes values in a finite alphabet $\mathcal X\times\mathcal Z$, the Fannes-Audenaert inequality implies
\[
\big|H_\theta(X,Z)-H_0(X,Z)\big|
\le
\delta\log(|\mathcal X||\mathcal Z|-1)+h_2(\delta),
\]
where $H_\theta(\cdot)$ denotes entropy under $P_\theta$, and $h_2(\cdot)$ is the binary entropy.
Moreover, total variation does not increase under marginalization, so
\[
\|P_\theta(Z)-P_0(Z)\|_{\mathrm{TV}}\le \delta,
\]
and applying Fannes-Audenaert on the alphabet $\mathcal Z$ yields
\[
\big|H_\theta(Z)-H_0(Z)\big|
\le
\delta\log(|\mathcal Z|-1)+h_2(\delta)
\le
\delta\log(|\mathcal X||\mathcal Z|-1)+h_2(\delta).
\]

Finally, using $I(X;Z)=H(X)+H(Z)-H(X,Z)$ and noting that $P_\theta(X)=P_0(X)=P(X)$ (hence $H_\theta(X)=H_0(X)$),
\begin{align*}
\big|I_\theta(X;Z)-I_0(X;Z)\big|
&=
\big|(H_\theta(Z)-H_0(Z))-(H_\theta(X,Z)-H_0(X,Z))\big|\\
&\le
\big|H_\theta(Z)-H_0(Z)\big|+\big|H_\theta(X,Z)-H_0(X,Z)\big|\\
&\le
2\Big(\delta\log(|\mathcal X||\mathcal Z|-1)+h_2(\delta)\Big).
\end{align*}
Thus we may take
\[
f(\varepsilon):=
2\Big(\delta\log(|\mathcal X||\mathcal Z|-1)+h_2(\delta)\Big),
\qquad
\delta:=\sqrt{\tfrac{\varepsilon}{2}},
\]
which satisfies $f(\varepsilon)\to 0$ as $\varepsilon\to 0$.
\end{proof}

\section{Decomposing Changes in Input Dependence}\label{app:mi-decomp}

\begin{definition}[Entropy changes]\label{def:entropy-changes}
Let $X$ be prompts and let $Z\sim \pi_\theta(\cdot\mid X)$ under the current policy, with reference policy $\pi_0$.
Define the conditional-entropy and marginal-entropy changes
\[
\Delta_{\mathrm{in}} := H_\theta(Z\mid X)-H_0(Z\mid X),
\qquad
\Delta_{\mathrm{marg}} := H_\theta(Z)-H_0(Z).
\]
\end{definition}

The change in mutual information decomposes as $\Delta I = \Delta_{\mathrm{marg}} - \Delta_{\mathrm{in}}$. If an intervention (e.g., an entropy bonus) increases within-prompt variability $H(Z\mid X)$ more than it increases the marginal diversity $H(Z)$, input dependence necessarily decreases.

\begin{theorem}\label{thm:mi-decomp}
With $\Delta_{\mathrm{in}}$ and $\Delta_{\mathrm{marg}}$ defined above,
\[
I_\theta(X;Z)-I_0(X;Z)=\Delta_{\mathrm{marg}}-\Delta_{\mathrm{in}}.
\]
In particular, if $\Delta_{\mathrm{in}}\ge \Delta_{\mathrm{marg}}+\gamma$ for some $\gamma>0$, then
\[
I_\theta(X;Z)\le I_0(X;Z)-\gamma,
\]
and especially $I_\theta(X;Z)<I_0(X;Z)$ whenever $\Delta_{\mathrm{in}}>\Delta_{\mathrm{marg}}$.
\end{theorem}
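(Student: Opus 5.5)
The plan is to apply the Shannon identity~\eqref{eq:entropy-mi-decomp} separately to the two joint laws $P_\theta(x,z)=P(x)\pi_\theta(z\mid x)$ and $P_0(x,z)=P(x)\pi_0(z\mid x)$ and then subtract. As in Theorem~\ref{thm:kl-mi}, I will work with finite alphabets $\mathcal X,\mathcal Z$, or more generally assume all entropies involved are finite, so that no $\infty-\infty$ expressions can appear. Under each law, $I(X;Z)=H(Z)-H(Z\mid X)$, where $H(Z)$ is the entropy of the respective marginal, $p_\theta(z)=\mathbb{E}_{x\sim P}[\pi_\theta(z\mid x)]$ or $p_0(z)=\mathbb{E}_{x\sim P}[\pi_0(z\mid x)]$. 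Both laws share the same prompt distribution $P(X)$, which makes the two identities directly comparable.

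To justify the identity, I will start from Definition~\ref{def:cond-entropy}:
\[
I_\theta(X;Z)=\mathbb{E}_{x\sim P,\,z\sim\pi_\theta(\cdot\mid x)}\bigl[\log\pi_\theta(z\mid x)-\log p_\theta(z)\bigr].
\]
The first term is $-H_\theta(Z\mid X)$. For the second, averaging over $x$ and $z\sim\pi_\theta(\cdot\mid x)$ is the same as averaging over $z\sim p_\theta$, so it equals $-\mathbb{E}_{z\sim p_\theta}[\log p_\theta(z)]=H_\theta(Z)$. This gives $I_\theta(X;Z)=H_\theta(Z)-H_\theta(Z\mid X)$, and the same argument gives $I_0(X;Z)=H_0(Z)-H_0(Z\mid X)$.

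Subtracting the two identities and regrouping gives
\[
I_\theta-I_0=\bigl(H_\theta(Z)-H_0(Z)\bigr)-\bigl(H_\theta(Z\mid X)-H_0(Z\mid X)\bigr)=\Delta_{\mathrm{marg}}-\Delta_{\mathrm{in}},
\]
using Definition~\ref{def:entropy-changes}. For the consequences, if $\Delta_{\mathrm{in}}\ge\Delta_{\mathrm{marg}}+\gamma$, then $I_\theta-I_0=\Delta_{\mathrm{marg}}-\Delta_{\mathrm{in}}\le-\gamma$. The strict inequality $I_\theta<I_0$ whenever $\Delta_{\mathrm{in}}>\Delta_{\mathrm{marg}}$ is the special case $\gamma=\Delta_{\mathrm{in}}-\Delta_{\mathrm{marg}}>0$.

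The only real obstacle is well-definedness. Subtracting identities is invalid if any entropy is infinite, for example on a countable $\mathcal Z$. I would therefore state the finiteness assumption explicitly, or note that on finite alphabets it holds automatically; once it is in place, every step above is elementary algebra.
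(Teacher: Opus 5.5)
Your proposal is correct and follows the same route as the paper, which writes $I(X;Z)=H(Z)-H(Z\mid X)$ under each policy, subtracts, and rearranges to get the two corollaries. Deriving that identity from the KL-form definition of $I(X;Z)$, and stating the finiteness condition that rules out $\infty-\infty$, are sound additions that the paper leaves implicit.
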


\begin{proof}
Using $I(X;Z)=H(Z)-H(Z\mid X)$,
\[
\begin{aligned}
I_\theta(X;Z)-I_0(X;Z)
&=\big(H_\theta(Z)-H_0(Z)\big)-\big(H_\theta(Z\mid X)-H_0(Z\mid X)\big) \\
&=\Delta_{\mathrm{marg}}-\Delta_{\mathrm{in}}.
\end{aligned}
\]
The sufficient-condition statements follow by rearranging the inequality.
\end{proof}

An entropy bonus acts directly on the per-prompt dispersion and increases $H_\theta(Z\mid X)$, but it does not explicitly encourage cross-prompt separation that would increase the marginal entropy $H_\theta(Z)$ by a comparable amount. Hence it is plausible that $\Delta_{\mathrm{in}}$ exceeds $\Delta_{\mathrm{marg}}$, in which case Theorem~\ref{thm:mi-decomp} implies $I_\theta(X;Z)$ decreases.

Appendix~\ref{app:reg} explains that when $\mathrm{RV}(x)$ is small, the task update can be weak, so reward-agnostic regularizers can have larger relative influence on the total update.

\section{GRPO Normalization Amplifies Noise at Low RV}\label{app:grpo}
GRPO-style normalization divides the advantage by $\sqrt{\mathrm{RV}(x)}$, which induces a $\mathrm{RV}(x)^{-1}$ noise amplification in the mean-squared error of the per-prompt gradient estimator.

For a fixed prompt $x$, define the normalized advantage
\[
\widetilde A(z;x):=\frac{A(z;x)}{\sqrt{\mathrm{RV}(x)}},
\qquad
A(z;x):=R(z;x)-b(x),
\qquad
b(x):=\mathbb{E}_{z\sim\pi_\theta(\cdot\mid x)}[R(z;x)].
\]
Given $K$ i.i.d.\ rollouts $z_1,\dots,z_K\sim \pi_\theta(\cdot\mid x)$, define
\[
\widehat g_{\mathrm{GRPO}}(x):=\frac{1}{K}\sum_{k=1}^K \widetilde A_k\, s_k,
\qquad
g_{\mathrm{GRPO}}(x):=\mathbb{E}[\widetilde A\,s\mid X=x],
\]
where $s_k=\nabla_\theta\log\pi_\theta(z_k\mid x)$.

Dividing the advantage by $\sqrt{\mathrm{RV}(x)}$ causes the gradient estimator's variance floor to scale as $\mathrm{RV}(x)^{-1}$, so prompts with small reward variance suffer disproportionately noisy updates under GRPO-style normalization.

\begin{proposition}[GRPO variance floor]\label{prop:grpo-variance}
Under Assumption~\ref{asm:reward-decomp}, the GRPO estimator satisfies
\[
\mathbb{E}\!\left[\big\|\widehat g_{\mathrm{GRPO}}(x)-g_{\mathrm{GRPO}}(x)\big\|^2\mid X=x\right]
\;\ge\;
\frac{1}{K}\cdot\frac{\sigma^2(x)}{\mathrm{RV}(x)}\;
\mathbb{E}[\|s\|^2\mid X=x].
\]
If $\sigma(x)=0$, the lower bound is zero and thus vacuous.
\end{proposition}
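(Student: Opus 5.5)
The plan is to reuse the variance-floor argument from the proof of Theorem~\ref{thm:rv-snr} nearly verbatim, replacing the advantage $A$ by the normalized advantage $\widetilde A=A/\sqrt{\mathrm{RV}(x)}$. Because $\mathrm{RV}(x)$ depends only on $x$, the normalization is a deterministic constant once $x$ is fixed, so it passes cleanly through every conditional expectation. We assume $\mathrm{RV}(x)>0$ so that $\widetilde A$ is defined. Note that $\mathrm{RV}(x)\ge\sigma^2(x)$ by the law of total variance, since $\mathrm{Var}(R\mid x)=\mathrm{Var}(\mu\mid x)+\sigma^2(x)$. Hence if $\sigma(x)>0$, then $\mathrm{RV}(x)>0$ automatically.

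\textbf{Step 1 (i.i.d.\ variance reduction).} Set $Y_k:=\widetilde A_k s_k$. Conditionally on $x$ these are i.i.d.\ with mean $g_{\mathrm{GRPO}}(x)$. Expanding the squared norm of the centered average, the cross terms vanish exactly as in the proof of Theorem~\ref{thm:rv-snr}, which gives
\[
\mathbb{E}\big[\|\widehat g_{\mathrm{GRPO}}-g_{\mathrm{GRPO}}\|^2\mid x\big]=\tfrac1K\,\mathbb{E}\big[\|\widetilde A s-g_{\mathrm{GRPO}}\|^2\mid x\big].
\]

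\textbf{Step 2 (isolate the reward noise).} Using Assumption~\ref{asm:reward-decomp} with $b(x)=\mathbb{E}[\mu\mid x]$, write $A=A_\mu+\varepsilon$. Then
\[
\widetilde A s-g_{\mathrm{GRPO}}=\Big(\tfrac{A_\mu}{\sqrt{\mathrm{RV}(x)}}s-g_{\mathrm{GRPO}}\Big)+\tfrac{\varepsilon}{\sqrt{\mathrm{RV}(x)}}s .
\]
The first bracket is a measurable function of $(x,z)$, and $s$ is also measurable in $(x,z)$. Conditioning on $(x,z)$ and using $\mathbb{E}[\varepsilon\mid x,z]=0$ therefore kills the cross term. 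Dropping the nonnegative first square leaves the lower bound
\[
\mathbb{E}\big[\|\widetilde A s-g_{\mathrm{GRPO}}\|^2\mid x\big]\ge \tfrac{1}{\mathrm{RV}(x)}\,\mathbb{E}\big[\varepsilon^2\|s\|^2\mid x\big].
\]

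\textbf{Step 3 (evaluate the noise term).} Apply the tower property once more. Since $\mathbb{E}[\varepsilon^2\mid x,z]=\sigma^2(x)$, we get $\mathbb{E}[\varepsilon^2\|s\|^2\mid x]=\sigma^2(x)\,\mathbb{E}[\|s\|^2\mid x]$. Combining this with Steps 1 and 2 yields the stated floor.

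\textbf{Vacuous case and main obstacle.} When $\sigma(x)=0$ the right-hand side is zero, so the inequality holds trivially. The only step needing care is the cross-term cancellation in Step 2. Two facts make it work: the normalizer depends on $x$ alone, and the centered signal part is $(x,z)$-measurable. Both follow directly from Assumption~\ref{asm:reward-decomp}, so I expect no real difficulty. The one thing to state explicitly is that we use the population $\mathrm{RV}(x)$, not the sample standard deviation used in practical GRPO, because the sample version would break the i.i.d.\ structure used in Step 1.
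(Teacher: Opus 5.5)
Your proof is correct and is the argument the paper intends. The paper states this proposition without a written proof; the evident derivation is the noise-floor half of the proof of Theorem~\ref{thm:rv-snr}, with the advantage rescaled by the $x$-only constant $1/\sqrt{\mathrm{RV}(x)}$, which is exactly your Steps 1--3. Your remark that $\mathrm{RV}(x)=\mathrm{Var}(\mu\mid x)+\sigma^2(x)\ge\sigma^2(x)$, so the normalized advantage is well defined whenever the bound is non-vacuous, is a small addition the paper omits.
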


This bound makes explicit that smaller $\mathrm{RV}(x)$ yields a larger variance floor for the normalized estimator if all other factors are the same.

\section{Core Author Contributions}

Zihan Wang contributed across the full project lifecycle, including conceptualization, codebase and environment development, formal analysis, experiments, figures and plots, paper writing, and project correspondence. Chi Gui and Xing Jin contributed to the key ideas, software infrastructure, experiments, plots, and paper writing. Licheng Liu primarily contributed to the formal analysis and theory, experiments, and participated in paper writing. Qineng Wang contributed to the key ideas, software infrastructure, figures and plots, experiments, and paper writing.



\end{CJK*}
\end{document}